\documentclass{article}

\usepackage{microtype}
\usepackage{graphicx}
\usepackage{subcaption}
\usepackage{booktabs} 

\usepackage{hyperref}

\usepackage[accepted]{icml2026}

\usepackage{amsmath}
\usepackage{amssymb}
\usepackage{mathtools}
\usepackage{amsthm}
\usepackage{multirow}

\usepackage[capitalize,noabbrev]{cleveref}

\theoremstyle{plain}
\newtheorem{theorem}{Theorem}
\newtheorem{proposition}{Proposition}
\newtheorem{lemma}{Lemma}

\theoremstyle{definition}
\newtheorem{definition}{Definition}
\newtheorem{assumption}{Assumption}
\theoremstyle{remark}
\newtheorem{remark}{Remark}

\newcommand{\C}[1]{{\mathcal{#1}}} %
\newcommand{\B}[1]{{\mathbb{#1}}} %
\newcommand{\BF}[1]{{\mathbf{#1}}} %

\newcommand{\E}{\mathbb{E}}
\newcommand{\R}{\mathbb{R}}
\newcommand{\K}{\mathcal{K}}

\newcommand{\bx}{\BF{x}}
\newcommand{\hbx}{\hat{\bx}}
\newcommand{\by}{\BF{y}}

\newcommand{\bu}{\BF{u}}
\newcommand{\bv}{\mathbf{v}}

\newcommand{\CO}{\C{O}}
\newcommand{\BO}{\BF{O}}

\newcommand{\oo}{\BF{o}}

\newcommand{\bzero}{\mathbf{0}}
\newcommand{\bone}{\mathbf{1}}
\newcommand{\Fcal}{\mathcal{F}}
\newcommand{\grad}{\nabla}

\newcommand{\gfrak}{\mathfrak{g}}

\usepackage[textsize=tiny]{todonotes}

\icmltitlerunning{Upper-Linearizability of Online Non-Monotone DR-Submodular Maximization over Down-Closed Convex Sets}
\usepackage{enumitem}
\begin{document}

\twocolumn[
  \icmltitle{Upper-Linearizability of Online Non-Monotone DR-Submodular Maximization over Down-Closed Convex Sets}

  \begin{icmlauthorlist}
    \icmlauthor{Yiyang Lu}{purdue}
    \icmlauthor{Hareshkumar Jadav}{iiti}
    \icmlauthor{Mohammad Pedramfar}{mila}
    \icmlauthor{Ranveer Singh}{iiti}
    \icmlauthor{Vaneet Aggarwal}{purdue}
  \end{icmlauthorlist}

  \icmlaffiliation{purdue}{Purdue University, West Lafayette, IN, USA}
  \icmlaffiliation{iiti}{IIT Indore, MP, India}
  \icmlaffiliation{mila}{Mila - Quebec AI Institute/McGill University, Montreal, QC, Canada}

  \icmlcorrespondingauthor{Yiyang Lu}{yiyanglu@purdue.edu}
  \icmlcorrespondingauthor{Vaneet Aggarwal}{vaneet@purdue.edu}
  
  \icmlkeywords{Machine Learning, ICML}

  \vskip 0.3in
]



\printAffiliationsAndNotice{}  

\begin{abstract}

We study online maximization of non-monotone Diminishing-Return(DR)-submodular functions over down-closed convex sets, a regime where existing projection-free online methods suffer from suboptimal regret and limited feedback guarantees.  
Our main contribution is a new structural result showing that this class is $1/e$-linearizable under carefully designed exponential reparametrization, scaling parameter, and surrogate potential, enabling a reduction to online linear optimization.  
This allows us to obtain first non-Frank-Wolfe type algorithms for this setting that obtain an approximation coefficient better than $1/4$.
Moreover, the linearization framework allows us to move beyond offline optimization.
As a result, we obtain $O(T^{1/2})$ static regret with a single gradient query per round and unlock adaptive and dynamic regret guarantees, together with improved rates under semi-bandit, bandit, and zeroth-order feedback.  
Across all feedback models, our bounds strictly improve the state of the art.

\end{abstract}

\section{Introduction}

Online optimization of submodular and DR-submodular functions has become a central primitive in machine learning, with applications in mean-field inference, revenue maximization, influence maximization, supply chain management, power network reconfiguration, and experimental design \citep{bian2019optimal, ito2016large, gu2023profit, aldrighetti2021costs, mishra2017comprehensive, li2023experimental}.
In these problems, an algorithm repeatedly selects actions from a convex domain while an adversary reveals a reward function, and performance is measured through notions of static, adaptive, or dynamic regret.
A long line of work has developed projection-free algorithms, typically based on Frank--Wolfe or boosting-style updates \citep{fazel2023fast,chen2018online,zhang2022stochastic,pedramfar2023unified,zhang2024boosting}.

While most of the study in DR-submodular optimization is for monotone objectives \citep{hassani2017gradient,fazel2023fast,chen2018online,zhang2022stochastic}, the non-monotone objective plays an important role in many applications such as price optimization, social networks recommendation, and budget allocation \citep{ito2016large,gu2023profit,alon2012optimizing}. In this work, we study non-monotone DR-submodular maximization over down-closed convex sets, which remains particularly challenging.
Down-closed domains include box constraints, knapsack polytopes, and intersections of matroids, and arise naturally in resource allocation and coverage problems. In this regime, even the best approximation ratio for online optimization remains open \citep{buchbinder2024constrained}, while the best achievable approximation rate is $1/e$ \citep{thang2021online,zhang2023online,pedramfar2023unified}. Further, for this regime, existing projection-free methods either require multiple oracle queries per round or only achieve suboptimal regret rates such as $O(T^{2/3})$ \cite{pedramfar2024unified}, and essentially no results were known for adaptive or dynamic regret.

Recent work in \citep{pedramfar2024linear} introduced the notion of \emph{linearizable} function classes and showed how such a structure enables a generic reduction from online linear optimization to a wide family of non-convex problems, including several DR-submodular settings.
While powerful, these results does not cover the non-monotone down-closed case with efficient query complexity and $\CO(T^{1/2})$regret. In this paper, we close this gap by establishing a new structural characterization for online non-monotone DR-submodular maximization over down-closed convex sets.
Our main technical contribution is to show that this class is \emph{$1/e$-linearizable} under a carefully designed exponential reparametrization and surrogate potential.

\textbf{Technical Novelty.} 
Achieving this result requires overcoming two fundamental theoretical barriers. First, establishing the structural $1/e$-linearizability reduction (Theorem~\ref{thm:main}) requires carefully managing the non-monotone penalty over down-closed sets. We achieve this by designing a novel surrogate potential $F(\cdot)$ that acts as an integrating factor, heavily weighting the early stages of the trajectory to enable an exact mathematical cancellation during the integration-by-parts analysis. Second, to translate this structure into an efficient online algorithm, we bypass the heavy computational cost of standard continuous greedy approximations by introducing a Jacobian-corrected gradient estimator (BQND, Algorithm ~\ref{alg:bqnd}). Unlike naive sampling, this estimator explicitly incorporates the curvature of our exponential mapping ($h(x) = 1 - e^{-x}$) into the gradient query, constructing unbiased linear surrogates of the non-monotone objective using a strict single-query budget. Together, these innovations completely decouple the non-convex oracle queries from the constraint handling, allowing our framework to accept any efficient regret-minimizing algorithm for linear functions as a base learner.

Once this structure is in place, a broad collection of algorithmic guarantees follow as immediate consequences through existing regret-transfer principles in \citep{pedramfar2024linear}.
We obtain the first projection-free online algorithms achieving $O(T^{1/2})$ static regret with only a single gradient query per round.
Moreover, the same framework unlocks adaptive and dynamic regret guarantees in adversarial environments, as well as improved rates under semi-bandit, bandit, and zeroth-order full-information feedback.
Across all feedback models, our results strictly improve the previously best-known bounds, as summarized in Table~\ref{tab:comparison_master}.

\begin{table*}[t]
\centering
\caption{Comparison of Regret Bounds for Online Non-Monotone DR-Submodular Maximization over Down-Closed Sets. \textbf{Oracle} denotes the feedback type ($\nabla F$: Gradient, $F$: Value). Our framework is the \textbf{first} to provide Adaptive and Dynamic regret guarantees while achieving $\CO(T^{1/2})$ static regret in the $O(1)$ query regime. \citet{thang2021online} achieves $\CO(T^{3/4})$ regret with $T^{3/4}$ queries per round when $\beta=\frac{3}{4}$. \citet{zhang2023online} achieves $\CO(T^{1/2})$ regret with $T^{3/2}$ queries per round when $\beta=\frac{3}{2}$, but only achieves $\CO(T^{4/5})$ regret with $\CO(1)$ queries.}
\label{tab:comparison_master}
\vskip 0.15in
\begin{small}
\begin{sc}
\resizebox{\textwidth}{!}{%
\begin{tabular}{ccccccccc}
\toprule
\multirow{2}{*}{\textbf{Oracle}} & \multirow{2}{*}{\textbf{Feedback}} & \multirow{2}{*}{\textbf{Reference}} & \multirow{2}{*}{\textbf{Approx.}} & \multirow{2}{*}{\textbf{Queries}} & \multicolumn{3}{c}{\textbf{Regret Guarantees}} \\
\cmidrule{6-8}
& & & & & \textbf{Static} & \textbf{Adaptive} & \textbf{Dynamic ($\times \sqrt{1+P_T}$)} \\
\midrule
\multirow{9}{*}{$\nabla F$} 
 & \multirow{5}{*}{Full Info} 
   & \citep{thang2021online} & $1/e$ & $T^{\beta},\beta\in[0,\frac{3}{4}]$ & $O(T^{1-\beta/3})$ & -- & -- \\
 & &  \cite{zhang2023online} & $1/e$ & $T^{\beta},\beta\in[0,\frac{3}{2}]$ & $O(T^{1-\beta/3})$ & -- & -- \\
 & &\cite{zhang2023online} & $1/e$ & $1$ & $O(T^{4/5})$ & -- & -- \\
 & & \citep{pedramfar2024unified} & $1/e$ & $1$ & $O(T^{2/3})$ & -- & -- \\
 & & \textbf{This Paper} & $\mathbf{1/e}$ & \textbf{1} & $\mathbf{O(T^{1/2})}$(Prop.~\ref{prop:static-regret}) & $\mathbf{O(T^{1/2})}$(Prop.~\ref{prop:adaptive-regret}) & $\mathbf{\tilde{O}(T^{1/2})}$(Prop.~\ref{prop:dynamic-regret}) \\
 \cmidrule{2-8}
 & \multirow{4}{*}{Semi-Bandit} 
   & \cite{zhang2023online} & $1/e$ & $1$ & $O(T^{4/5})$ & -- & -- \\
 & & \citep{pedramfar2024unified} & $1/e$ & $1$ & $O(T^{3/4})$ & -- & -- \\
 & & \textbf{This Paper} & $\mathbf{1/e}$ & \textbf{1} & $\mathbf{O(T^{2/3})}$ (Prop.~\ref{prop:semi_bandit}) & $\mathbf{O(T^{2/3})}$(Prop.~\ref{prop:semi_bandit}) & $\mathbf{\tilde{O}(T^{2/3})}$(Prop.~\ref{prop:dynamic_limited}) \\
\midrule
\multirow{6}{*}{$F$} 
 & \multirow{3}{*}{Full Info} 
   & \cite{pedramfar2024unified} & $1/e$ & $1$ & $O(T^{4/5})$ & -- & -- \\
 & & \textbf{This Paper} & $\mathbf{1/e}$ & \textbf{1} & $\mathbf{O(T^{3/4})}$(Prop.~\ref{prop:zo_full_info}) & $\mathbf{O(T^{3/4})}$(Prop.~\ref{prop:zo_full_info}) & $\mathbf{\tilde{O}(T^{3/4})}$(Prop.~\ref{prop:dynamic_limited}) \\
 \cmidrule{2-8}
 & \multirow{4}{*}{Bandit} 
   & \cite{zhang2023online} (Det.)$^*$ & $1/e$ & $1$ & $O(T^{8/9})$ & -- & -- \\
 & & \cite{pedramfar2024unified} & $1/e$ & $1$ & $O(T^{5/6})$ & -- & -- \\
 & & \textbf{This Paper} & $\mathbf{1/e}$ & \textbf{1} & $\mathbf{O(T^{4/5})}$ (Prop.~\ref{prop:bandit}) & $\mathbf{O(T^{4/5})}$ (Prop.~\ref{prop:bandit}) & $\mathbf{\tilde{O}(T^{4/5})}$(Prop.~\ref{prop:dynamic_limited}) \\
\bottomrule
\end{tabular}%
}
\end{sc}
\end{small}
\vskip 0.05in
\raggedright
\footnotesize{$^*$Without indication of deterministic (Det.), the oracle is by default assumed to be stochastic with noise.}
\end{table*}

\textbf{Summary of Contributions.} Our primary contribution is breaking the theoretical bottleneck that historically trapped $1/e$-approximation algorithms in $\mathcal{O}(T^{2/3})$ regret. Specifically:
\begin{enumerate}[leftmargin=*,noitemsep, topsep=0pt]
\item Structural Breakthrough (Theorem~\ref{thm:main}): We prove that non-monotone DR-submodular functions over down-closed convex sets are $1/e$-linearizable. This result mathematically decouples the best-known online approximation ratio from Frank-Wolfe mechanics, enabling a clean reduction to Online Linear Optimization (OLO).
\item Algorithmic Enabler (BQND, Algorithm~\ref{alg:bqnd}): We introduce a novel Jacobian-corrected gradient estimator that constructs unbiased linear surrogates of the non-monotone objective using a strict single-query budget, bypassing the computational cost of standard continuous greedy approximations.
\item Best-known \& Non-Stationary Regret Guarantees (Table~\ref{tab:comparison_master}): By seamlessly plugging our structural theorem into existing OLO meta-algorithms, a broad suite of state-of-the-art guarantees naturally follows as direct corollaries. We obtain best-known $\mathcal{O}(T^{1/2})$ static regret with $\mathcal{O}(1)$ queries, and unlock the first adaptive and dynamic regret guarantees for this setting, alongside improved rates across limited feedback settings.
\end{enumerate}

\section{Related Works}

\textbf{Online Non-monotone DR-Submodular Maximization.} While the online maximization of monotone DR-submodular functions \citep{zhang2022stochastic, hassani2017gradient, chen2018online, fazel2023fast, pedramfar2023unified, pedramfar2025uniform, lu2025decentralized} is well-understood, permitting efficient greedy solutions \citep{streeter2008online}, the non-monotone regime presents significantly greater challenges. Algorithms must balance standard greedy ascent steps with corrective reduction steps to navigate the non-monotone landscape \citep{bian2017continuous}. Recently, \citet{pedramfar2024linear} proposed the elegant framework of \emph{linearizable} functions, which reduces complex non-convex optimization problems, including monotone DR-submodular maximization \citep{pedramfar2024linear}, non-monotone DR-submodular maximization under general convex set \citep{pedramfar2024linear}, regularized phase retrieval problem \citep{sarkar2025online}, one-sided smooth function optimization \cite{pedramfar2026gamma},  to Online Linear Optimization (OLO). 
As a special case, these approaches yield $1/4$ approximation ratio for non-monotone DR-submodular functions under general convex constraint set containing the origin, which is the optimal approximation coefficient for this class of problems. (See \citet{mualem2023resolving}).

\textbf{The Down-Closed Convex Sets \& The Frank-Wolfe Bottleneck.} Overcoming this $1/4$ limit requires focusing on specific constraint geometries, such as down-closed convex sets. Even in the offline setting, finding the optimal approximation ratio over down-closed sets has remained an open challenge for over a decade. While an absolute upper bound of $0.478$ exists \citep{gharan2011submodular, qi2024maximizing}, the best achievable offline rate is currently $0.401$ \citep{buchbinder2024constrained}. 
In the online setting, the best known achievable approximation is $1/e$ \citep{thang2021online, zhang2023online}. 
Historically, achieving this $1/e$ ratio online has been strictly bottlenecked by a reliance on Frank-Wolfe (FW) and continuous greedy algorithms.
This reliance introduces a severe trade-off: FW mechanics couple the non-convexity to the update rules, trapping the regret at $\mathcal{O}(T^{2/3})$ for single-query algorithms \citep{pedramfar2024unified} or requiring computationally expensive batch queries of $\mathcal{O}(T^{3/2})$ to achieve $\mathcal{O}(T^{1/2})$ regret \citep{zhang2023online}. 
This Frank-Wolfe monopoly prevents the use of off-the-shelf OLO/OCO algorithms, and our work breaks this exact bottleneck by proving that $1/e$ can be achieved while preserving a clean linearizable reduction to OCO. Our algorithm is the first to simultaneously achieve $\CO(T^{1/2})$ $1/e$-regret and $\CO(1)$ query efficiency in the down-closed regime.

\textbf{Non-Stationary and Limited Feedback}
While dynamic \citep{zinkevich2003online, zhang2018adaptive, zhao2021bandit} and adaptive \citep{hazan2009efficient, garber2022new} regret bounds are well-established for convex optimization, these guarantees remain less explored for non-monotone DR-submodular functions. The combination of non-convexity and environmental drift presents a formidable barrier that standard expert-tracking meta-algorithms fail to address. 
Recent unified projection-free frameworks developed by \citet{pedramfar2024unified} have successfully addressed various feedback models (semi-bandit, bandit) for adversarial DR-submodular optimization, but they stop short of providing adaptive and dynamic regret guarantees for the challenging non-monotone, down-closed setting.
We provide the first dynamic regret guarantees ($\tilde{O}(\sqrt{T(1+P_T)})$) for this problem class. Furthermore, we extend these robust guarantees to semi-bandit and bandit feedback settings, offering better results for non-stationary maximization under limited information of non-monotone DR-submodular functions over down-closed convex set.

\section{Preliminaries}
\label{sec:preliminaries}

\subsection{Notations}
We use $\C{F}$ to denote the function class to which all objective functions $f_t$ belong.
We use $\C{A}$ to denote an online optimization algorithm.
We denote vectors as boldface lower-case letters (e.g., $\bx,\by \in \mathbb{R}^d$) and denote their coordinates as $x_i, y_i$. 
For any two vectors $\mathbf{x}, \mathbf{y} \in \mathbb{R}^d$, we denote their element-wise product by $\mathbf{x} \odot \mathbf{y}$ and their element-wise exponential by $e^{\mathbf{x}}$. The inequality $\mathbf{x} \le \mathbf{y}$ is understood coordinate-wise.
We define the coordinate-wise probabilistic sum as $\bx \oplus\by \triangleq \bone-(\bone-\bx)\odot(\bone-\by)$.

\textbf{Constraint Set} \quad We consider optimization over a convex set $\mathcal{K} \subseteq [0,1]^d$. We assume $\K$ is \textit{down-closed}, meaning that for any $\mathbf{y} \in \mathcal{K}$ and any $\mathbf{0} \le \mathbf{x} \le \mathbf{y}$, we have $\mathbf{x} \in \mathcal{K}$. 
Additionally, we assume that:
\begin{assumption}[Geometry of Constraint Set]
\label{ass:set}
The constraint set $\mathcal{K}$ has bounded diameter $D > 0$, i.e., $\max_{\mathbf{x}, \mathbf{y} \in \mathcal{K}} \|\mathbf{x} - \mathbf{y}\| \le D$.
\end{assumption}
To ensure computational efficiency, we assume access to a base online linear optimization algorithm $\mathcal{A}$ that is efficient for the constraint set $\mathcal{K}$ using either projection-based  or projection-free (e.g., Frank-Wolfe, SO-OGA) updates.

\textbf{DR-Submodularity} \quad We focus on non-negative, differentiable functions $f: [0,1]^d \to \mathbb{R}_{\ge 0}$. A function $f$ over $\K$ is called \textit{continuous DR-submodular} if $\forall \mathbf{x} \le \mathbf{y} \in \K$, we have $\nabla f(\mathbf{x}) \ge \nabla f(\mathbf{y})$. Equivalently, if $f$ is twice-differentiable, all entries of its Hessian matrix are non-positive ($\nabla^2 f(\mathbf{x}) \le \mathbf{0}$). We explicitly \textit{do not} assume monotonicity; entries of $\nabla f(\mathbf{x})$ may be negative. Additionally, we assume that:
\begin{assumption}[Regularity of Objective Function]
\label{ass:function}
    The functions $f_t \in \C{F}$ are $M_1$-Lipschitz continuous, i.e., $\|\nabla f_t(\mathbf{x})\| \le M_1$ for all $\mathbf{x} \in \mathcal{K}$, and $L$-smooth, i.e., $\|\nabla f_t(\mathbf{x}) - \nabla f_t(\mathbf{y})\| \le L \|\mathbf{x} - \mathbf{y}\|$.
\end{assumption}

\subsection{Problem Setting: Adversarial Online Optimization}
We consider a standard adversarial online optimization of time horizon $T$ with function class $\C{F}$ over constraint set $\K$. 
At round $t$, the player selects a pair of points $\hbx_t, \bu_t \in \mathcal{K}$, an adversary reveals a objective function $f_t \in \C{F}$ with its query oracle $\BF{O}_t$. Then the player plays $\hbx_t$, queries $\BF{O}_t$ at $\bu_t$, and receives feedback $\oo_t$ and updates its decision.

The query oracle is the only way for the player to learn about the functions selected by the adversary. In our work, for the main algorithm, we assume that:
\begin{assumption}[Unbiased Gradient Oracle]
\label{ass:oracle}
For the functions $f_t$, the algorithm has access to a stochastic gradient oracle $\BF{O}$ such that for any query point $\mathbf{x}_t$, it returns a vector $\mathbf{g}_t$ satisfying:
\begin{equation*}
    \E[\mathbf{g}_t \mid \mathbf{x}_t] = \nabla f_t(\mathbf{x}_t) \quad \text{and} \quad \|\mathbf{g}_t\|  \le B_{1}.
\end{equation*}
\end{assumption}

\subsection{Limited Feedback Setting}

When $\bu_t=\hbx_t$, we say the queries are trivial, as the points of query are the same as the played actions; otherwise, we say the queries are non-trivial. We say the provided oracle $\BO_t$ is first-order if it returns the gradient of the given function at the point of query, or zeroth-order if it returns the value of the function.

When the player have non-trivial queries, we say the player takes full-information feedback, which can be either first-order or zeroth-order. 
When the player has trivial queries, we say the player takes semi-bandit feedback if the adversarial provide first-order oracles, or bandit feedback if zeroth-order oracles are provided. 

In Section~\ref{sec:regret}, we begin our analysis with algorithm under first-order full-information feedback, and gives regret analysis. Then we consider zeroth-order full-information feedback, semi-bandit feedback, and bandit feedback,

\subsection{Regret}

Our goal is to minimize the $\alpha$-regret, which measures the gap between a algorithm and an $\alpha$-approximate static optimum. $\alpha$ is often referred to as the optimal approximation ratio. The  $\alpha${\bf-static regret} is defined as:
\begin{equation}
    \mathcal{R}_\alpha(T) \triangleq \alpha \max_{\mathbf{x} \in \mathcal{K}}\sum_{t=1}^T f_t(\mathbf{x}) - \sum_{t=1}^T \E[f_t(\mathbf{x}_t)].
\end{equation}
Beyond static regret, we also consider robustness to non-stationary environments via two advanced metrics. \textbf{Adaptive regret} is defined as the maximum regret over any contiguous interval $[s, e] \subseteq [T]$, i.e.,
\begin{equation*}
    \mathcal{AR}_\alpha(T) \triangleq \sup_{[s,e] \subseteq [T]} \left\{ \alpha \max_{\mathbf{x} \in \mathcal{K}}  \sum_{t=s}^ef_t(\mathbf{x}) - \sum_{t=s}^e  \E[f_t(\mathbf{x}_t)] \right \}.
\end{equation*}
\textbf{Dynamic regret} is defined as the regret against a sequence of time-varying comparators 
$\bu_t^* \triangleq \arg\max_{\bu\in\K} f_t(\bu),$
bounded by the path length $P_T = \sum_{t=1}^T \|\mathbf{u}_t^* - \mathbf{u}_{t-1}^*\|$, i.e.,
\begin{equation}
    \mathcal{DR}_\alpha(T) \triangleq \alpha \sum_{t=1}^T  f_t(\bu_t^*) - \sum_{t=1}^T \E[f_t(\mathbf{x}_t)].
\end{equation}
\begin{remark}[Optimal Approximation Ratio $\alpha$]
Maximizing non-monotone DR-submodular functions is NP-hard, even in the offline setting. Formally, this means the best efficient algorithm can only guarantee a value of $\alpha$ times the optimal value; thus, $\alpha$ is called the optimal approximation ratio. For monotone objectives, algorithms can achieve a stronger approximation ratio of $1-1/e$ (if the constraint set contains the origin) or $1/2$ (for general convex sets). However, these monotonicity-based benchmarks are fundamentally unachievable in the non-monotone regime. For non-monotone functions, no polynomial-time algorithm can achieve an approximation ratio better than $0.478$ \citep{gharan2011submodular, qi2024maximizing}. While recent offline algorithms have narrowed this gap by achieving a $0.401$-approximation \citep{buchbinder2024constrained}, the best known guarantee for efficient online algorithms remains $1/e \approx 0.367$ \citep{thang2021online, zhang2023online}.
\end{remark}

\subsection{Linearizability}
A core tool in our analysis is the \textit{linearizable} framework introduced by \citet{pedramfar2024linear}. This property allows us to reduce non-convex DR-submodular optimization to online linear optimization.

\begin{definition}[Linearizability]
\label{def:upper_linearizable}
A function $f$ over $\mathcal{K}$ is $\alpha$-linearizable if there exists a mapping $h: \mathcal{K} \to \mathcal{K}$ and a vector field $\mathfrak{g}: \mathcal{F} \times \mathcal{K} \to \mathbb{R}^d$ such that for all $\mathbf{x}, \mathbf{y} \in \mathcal{K}$:
\begin{equation}
    \alpha f(\mathbf{y}) - f(h(\mathbf{x})) \le \beta\langle \mathfrak{g}(f, \mathbf{x}), \mathbf{y} - \mathbf{x} \rangle.
\end{equation}
\end{definition}
Intuitively, the definition reduces online non-convex DR-submodular maximization to online linear optimization (OLO): it upper-bounds the scaled reward $\alpha f(\by)$ (up to $f(h(\bx))$) by the linear term $\langle \mathfrak{g}(f,\bx),\,\by-\bx\rangle$.
Thus, any low-regret OLO algorithm run on $\mathfrak{g}(f_t,\bx_t)$ transfers to $\alpha$-regret for the original rewards, up to the constant $\beta$ (and the mapping $h$).
We call $h$ reparameterization, $\beta$ scaling parameter, and $\gfrak$ the surrogate potential.
Here $\alpha$ is exactly the approximation/competitive factor in our regret benchmark.

\section{Non-monotone DR-submodular Functions over Down-Closed Convex Sets are $1/e$-Linearizable}
\label{sec:linearizability}

As previously described, in this paper, we consider the case where the constraint set $\mathcal{K} \subseteq [0,1]^d$ is a down-closed convex set containing the origin. We show that the class of non-monotone DR-submodular functions over such sets is linearizable with $1/e$-approximation ratio.

A central challenge for proving upper-linearizablility of non-monotone functions over $\K$ is to construct a global lower bound that relates the algorithm's trajectory to an arbitrary comparator $\mathbf{y}$ (e.g., the global optimum). Unlike monotone regimes where simple greedy algorithms suffice, the non-monotone landscape requires balance between ascending gradients and shrinking constraints. 

The following key Lemma overcomes this by establishing a novel structural inequality for our exponential reparameterization $h_z(\mathbf{x})$. 
This inequality is a mathematical specialization of Lemma 4.1 from \citet{buchbinder2024constrained} to the constant-path case.
By leveraging the down-closed property of the constraint set, this formulation provides the exact structural piece required to accommodate the adversarial online setting.
It proves that this specific reparameterization preserves a strict lower bound relative to any target $\mathbf{y}$, effectively bridging the gap between the DR-submodular property and our $1/e$ approximation ratio.

\begin{lemma}
\label{lem:niv}
Let $f$ be a non-monotone continuous DR-submodular function over a down-closed convex set.
For any $\bx, \by \in [0,1]^d$ and $z \in [0,1]$, let $h_z(\bx)=\bone - e^{-z\bx}$, we have:
\begin{equation} \label{eq:lemma41}
    f(h_z(\bx) \oplus \by) \ge e^{-z\bar{x}} f(\by) \geq e^{-z} f(\by)
\end{equation}
where $\bar{x} = \max_j [\bx]_j$. (here $[\bx]_j$ is the $j$-th component of the vector $\bx$).
\end{lemma}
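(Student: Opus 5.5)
The plan is to reduce the statement to the classical inequality $f(\mathbf{a}\oplus\by)\ge (1-\max_j a_j)\,f(\by)$, valid for non-negative DR-submodular functions on $[0,1]^d$, applied with $\mathbf{a}=h_z(\bx)$. Write $\mathbf{a}=h_z(\bx)=\bone-e^{-z\bx}\in[0,1]^d$ and set $\theta\triangleq\max_j a_j$. Since $t\mapsto 1-e^{-zt}$ is nondecreasing, we get $\theta = 1-e^{-z\bar{x}}$, so $1-\theta=e^{-z\bar{x}}$. The second inequality in \eqref{eq:lemma41} is then immediate: $\bar{x}\le 1$ and $z\ge 0$ give $e^{-z\bar{x}}\ge e^{-z}$, and $f(\by)\ge 0$. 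The substance is therefore the first inequality.

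For the first inequality, note that $\mathbf{a}\oplus\by=\by+\mathbf{a}\odot(\bone-\by)$. If $\theta=0$, then $\mathbf{a}=\bzero$ and the claim reads $f(\by)\ge f(\by)$, so I assume $\theta>0$. Define the direction $\bv\triangleq(\mathbf{a}/\theta)\odot(\bone-\by)$. Then $\bv\ge\bzero$, and since $a_j/\theta\le 1$ we have $\by+\bv\le \by+(\bone-\by)=\bone$. Hence the whole segment $\by+\lambda\bv$, $\lambda\in[0,1]$, lies in $[0,1]^d$, and $\mathbf{a}\oplus\by=\by+\theta\bv$. I then consider the one-dimensional function $\phi(\lambda)=f(\by+\lambda\bv)$.

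The key step is that $\phi$ is concave on $[0,1]$. Indeed, $\phi'(\lambda)=\langle\nabla f(\by+\lambda\bv),\bv\rangle$. For $\lambda\le\lambda'$ the points satisfy $\by+\lambda\bv\le\by+\lambda'\bv$ coordinatewise, so the DR property (antitone gradient) gives $\nabla f(\by+\lambda\bv)\ge\nabla f(\by+\lambda'\bv)$. Pairing with $\bv\ge\bzero$ shows $\phi'$ is nonincreasing. Concavity then yields
\[
f(\mathbf{a}\oplus\by)=\phi(\theta)\ge(1-\theta)\phi(0)+\theta\phi(1)\ge(1-\theta)f(\by)=e^{-z\bar{x}}f(\by),
\]
where the last inequality drops $\theta\phi(1)\ge 0$ using non-negativity of $f$.

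The main obstacle is a domain issue rather than the computation. The DR property is stated "over $\K$", whereas the segment from $\by$ to $\by+\bv$ may leave $\K$: down-closedness only controls points below elements of $\K$, and $\by\oplus\mathbf{a}$ lies above $\by$. I would therefore state explicitly that $f$ is DR-submodular (and non-negative, as assumed in the preliminaries) on all of $[0,1]^d$, which is how the lemma is phrased ("for any $\bx,\by\in[0,1]^d$"). The down-closed structure of $\K$ is then not needed for this lemma itself and only enters later, when the comparator $\by\in\K$ is used in Theorem~\ref{thm:main}.
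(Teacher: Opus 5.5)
Your proof is correct, and it takes a genuinely different route from the paper. The paper does not argue directly. It specializes Lemma 4.1 of \citet{buchbinder2024constrained} to the constant path $\bx(\tau)\equiv\bx$ with horizon $t=z$, drops the non-negative series $\sum_{i\ge 1}\frac{z^i}{i!}f(\bone-\mathbf{a}\odot(\bone-\bx)^i)$, and substitutes $\by=\bone-\mathbf{a}$. You instead prove the classical bound $f(\mathbf{a}\oplus\by)\ge(1-\max_j a_j)f(\by)$ from scratch. You use only concavity of $f$ along the non-negative direction $\bv=(\mathbf{a}/\theta)\odot(\bone-\by)$ and non-negativity of $f$, so the argument is self-contained and elementary.

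Your route also gains something concrete. The paper's specialization with $t=z$ only gives $f(h_z(\bx)\oplus\by)\ge e^{-z}f(\by)$, and the sharper middle bound $e^{-z\bar{x}}f(\by)$ is then asserted without justification. Your identity $\theta=1-e^{-z\bar{x}}$ proves it directly. The paper's route could be repaired by applying Lemma 4.1 to the rescaled path $\bx(\tau)\equiv\bx/\bar{x}$ on $[0,z\bar{x}]$, whose integral is again $z\bx$. On the other hand, the cited lemma places the statement inside a more general path-dependent inequality, whose extra terms are simply discarded here. Since Theorem~\ref{thm:main} only uses the weaker $e^{-z}f(\by)$, either argument suffices.

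Your domain remarks are accurate as well. Both proofs need $f$ to be non-negative and DR-submodular on all of $[0,1]^d$; Lemma 4.1 is itself stated for $f:[0,1]^d\to\R_{\ge 0}$. Neither proof uses down-closedness of $\K$.
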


The proof of Lemma~\ref{lem:niv} is provided in Appendix~\ref{app:lem_niv_proof}. Next, we have the main result of this paper:

\begin{theorem}[Main Theorem]\label{thm:main}
Let $\mathcal{K} \subseteq [0,1]^d$ be a down-closed convex set such that $\bzero \in \mathcal{K}$.  
Let $f: [0,1]^d \to \R_{\ge 0}$ be a non-negative, differentiable, non-monotone DR-submodular function.
Define the mapping $h: \mathcal{K} \to \mathcal{K}$ as $h(\bx) \triangleq \bone - e^{-\bx}$ and $h_z(\bx) \triangleq \bone - e^{-z\bx}$.
Let the vector field $\gfrak: \Fcal \times \mathcal{K} \to \R^d$ be $\gfrak(f, \bx) \triangleq \grad F(\bx)$, where $F: \mathcal{K} \to \R$ is the function defined by:
\begin{equation}
\label{eq:F}
    F(\bx) \triangleq \int_{0}^{1} \frac{e^{z-1}}{(1 - e^{-1})z} \left( f(\bone - e^{-z\bx}) - f(\mathbf{0})\right) \, dz.
\end{equation}
Then for $\forall\bx, \by \in \mathcal{K}$, the following inequality holds:
\begin{equation}
\frac{1}{e} f(\by) - f(h(\bx))
\le
    (1-e^{-1})\langle \gfrak(f, \bx), \by - \bx \rangle.
\end{equation}
Thus, the function $f$ is $\alpha$-linearizable with approximation coefficient $\alpha = \frac{1}{e}$, the scaling parameter $\beta=1-e^{-1}$, the reparamterization $h(\bx)=1-e^{-\bx}$, and surrogate potential $\gfrak$.
\end{theorem}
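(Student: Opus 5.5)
The plan is to compute $\langle \gfrak(f,\bx), \by-\bx\rangle$ explicitly by differentiating under the integral in \eqref{eq:F}. This splits it into a ``$\by$-part'' and an ``$\bx$-part''. The $\by$-part is bounded below using concavity along nonnegative directions together with Lemma~\ref{lem:niv}. The $\bx$-part is rewritten by integration by parts, so that the leftover integrals cancel exactly.

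\textbf{Step 1 (gradient of $F$).} Write $w(z) = e^{z-1}/(1-e^{-1})$. Since $\nabla_\bx f(\bone - e^{-z\bx}) = z\, e^{-z\bx}\odot \nabla f(h_z(\bx))$, the factor $1/z$ cancels and
\begin{equation*}
\nabla F(\bx) = \int_0^1 w(z)\, e^{-z\bx}\odot \nabla f(h_z(\bx))\, dz .
\end{equation*}
Differentiation under the integral sign is justified by smoothness (Assumption~\ref{ass:function}). The factor $(f(h_z(\bx))-f(\bzero))/z$ stays bounded as $z\to 0$, because $\|h_z(\bx)\|\le z\|\bx\|$. Hence
\begin{equation*}
(1-e^{-1})\langle \gfrak(f,\bx),\by-\bx\rangle = \int_0^1 e^{z-1}\langle e^{-z\bx}\odot\nabla f(h_z(\bx)),\by\rangle dz - \int_0^1 e^{z-1}\tfrac{d}{dz} f(h_z(\bx))\, dz .
\end{equation*}
Here I use that $\tfrac{d}{dz} f(h_z(\bx)) = \langle \nabla f(h_z(\bx)), \bx\odot e^{-z\bx}\rangle$.

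\textbf{Step 2 (the $\bx$-part).} Integrate by parts:
\begin{equation*}
\int_0^1 e^{z-1}\tfrac{d}{dz} f(h_z(\bx))\,dz = f(h(\bx)) - e^{-1}f(\bzero) - \int_0^1 e^{z-1} f(h_z(\bx))\,dz .
\end{equation*}

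\textbf{Step 3 (the $\by$-part).} Set $\bu = h_z(\bx)$, so that $e^{-z\bx} = \bone-\bu$ and $\bu\oplus\by = \bu + (\bone-\bu)\odot\by$. The direction $(\bone-\bu)\odot\by$ is nonnegative, and DR-submodularity makes $f$ concave along nonnegative directions. This gives
\begin{equation*}
\langle \nabla f(\bu), (\bone-\bu)\odot \by\rangle \ge f(\bu\oplus\by) - f(\bu).
\end{equation*}
Since $\bx\in\K\subseteq[0,1]^d$, Lemma~\ref{lem:niv} yields $f(h_z(\bx)\oplus\by)\ge e^{-z}f(\by)$. Therefore the first integral is at least
\begin{equation*}
\int_0^1 e^{z-1}\bigl(e^{-z}f(\by) - f(h_z(\bx))\bigr)dz = e^{-1}f(\by) - \int_0^1 e^{z-1} f(h_z(\bx))\,dz .
\end{equation*}

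\textbf{Step 4 (combine).} Subtracting Step~2 from Step~3, the integrals $\int_0^1 e^{z-1}f(h_z(\bx))\,dz$ cancel exactly. What remains is
\begin{equation*}
(1-e^{-1})\langle\gfrak(f,\bx),\by-\bx\rangle \ge e^{-1}f(\by) - f(h(\bx)) + e^{-1}f(\bzero).
\end{equation*}
Non-negativity of $f$ lets us drop the last term, which gives the claim. We also note that $h(\bx)\le \bx$ coordinatewise, so $h(\bx)\in\K$ by down-closedness; hence $h$ maps $\K$ to $\K$.

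The main obstacle is the weighting. The kernel $e^{z-1}$ has to be chosen so that the integration-by-parts remainder matches exactly the $-f(h_z(\bx))$ terms produced by the concavity bound. This is precisely why $F$ carries the factor $e^{z-1}/z$. The main things to verify carefully are this cancellation and the boundary term at $z=0$, which is where $f(\bzero)$ and non-negativity enter.
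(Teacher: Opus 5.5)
Your proposal is correct and follows essentially the same route as the paper. You differentiate under the integral to obtain the $e^{-z\bx}$-weighted gradient, integrate the $-\bx$ part by parts and drop $e^{-1}f(\bzero)\ge 0$, bound the $\by$ part via concavity along the nonnegative direction $(\bone-h_z(\bx))\odot\by$ (Lemma~\ref{lem:niv_sub}) together with Lemma~\ref{lem:niv}, and let the $\int_0^1 e^{z-1}f(h_z(\bx))\,dz$ terms cancel. Two details are actually handled more carefully than in the paper: your bound near $z=0$ via $\|h_z(\bx)\|\le z\|\bx\|$ is the one that controls the $1/z$ factor (the paper's $\|\bone-e^{-z\bx}\|\le 1$ does not), and you correctly state the final relation as an inequality rather than the equality written in the paper's last line.
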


\begin{remark}
    Intuitively, the exponential mapping $h(\mathbf{x})$ is specifically chosen because it ensures that the reparameterized trajectory strictly remains within the valid domain for downward-closed convex sets. In the same time, $F(\mathbf{x})$ acts as an integrating factor that heavily weights the early stages of the trajectory, which enables an exact mathematical cancellation of the non-monotone penalty during the integration-by-parts step of the subsequent analysis.
\end{remark}

\begin{proof}
Clearly we have $F(\BF{0})=0$. For any $\bx \neq \BF{0}$, the integrand in Equation~\eqref{eq:F} is a continuous function of $z$ that is bounded by:
\begin{align*}
    &\frac{e^{z-1}}{(1 - e^{-1})z} \left( f(\bone - e^{-z\bx}) - f(\mathbf{0})\right) \\
    &\qquad \overset{(a)}{\le} \frac{e^{z-1}}{(1 - e^{-1})z} M_1 \Vert \bone - e^{-z\bx} \Vert \overset{(b)}{\le} \frac{1}{1 - e^{-1}} M_1,
\end{align*}
where (a) follows from the $M_1$-Lipschitz continuity of $f$, and (b) uses the bound
$\|\bone - e^{-z\bx}\|\le 1$.
Therefore $F$ is well-defined on $[0,1]^d$.

Next, differentiating $F$ with respect to $\bx$,
\begin{align}
\nabla F(\bx)
&= \int_0^1 \frac{e^{z-1}}{(1-e^{-1})z}\,\nabla\!\big(f(h_z(\bx))-f(\bzero)\big)\,dz \nonumber\\
&\overset{(a)}{=} \int_0^1 \frac{e^{z-1}}{(1-e^{-1})z}\,\Big(\nabla f(h_z(\bx))\odot \frac{\partial}{\partial \bx}h_z(\bx)\Big)\,dz \nonumber\\
&\overset{(b)}{=} \int_0^1 \frac{e^{z-1}}{1-e^{-1}}\,\big(\nabla f(h_z(\bx))\odot e^{-z\bx}\big)\,dz. \label{eq:gradF_clean}
\end{align}
where (a) uses the chain rule w.r.t. $\bx$, and (b) uses $\frac{\partial}{\partial \bx} h_z(\bx)= z\, e^{-z\bx}$ because $h_z(\bx)=\bone-e^{-z\bx}$.

Now, we use \eqref{eq:gradF_clean}, and $\bx$ is independent of $z$, we have
\begin{align}
&(1-e^{-1})\langle \nabla F(\bx),-\bx\rangle \nonumber\\
&\qquad\overset{(a)}{=} \int_0^1 e^{z-1}\left\langle \nabla f(h_z(\bx))\odot e^{-z\bx},\; -\bx \right\rangle dz \nonumber\\
&\qquad\overset{(b)}{=} \int_0^1 e^{z-1}\left\langle \nabla f(h_z(\bx)),\; -\bx\odot e^{-z\bx} \right\rangle dz, \nonumber \\
&\qquad\overset{(c)}{=} \int_0^1 e^{z-1}\left(-\frac{d}{dz} f(h_z(\bx))\right)\,dz \label{eq:cost_inter}
\end{align}
where (a) uses linearity of the inner product and interchange of integral and inner product, (b) uses the identity
$\langle \mathbf{a}\odot \mathbf{b},\,\mathbf{c}\rangle=\langle \mathbf{a},\,\mathbf{b}\odot \mathbf{c}\rangle$, and (c) uses the chain rule in $z$ that $\frac{d}{dz} f(h_z(\bx))
=\left\langle \nabla f(h_z(\bx)),\,\frac{d}{dz}h_z(\bx)\right\rangle 
=\left\langle \nabla f(h_z(\bx)),\,\bx\odot e^{-z\bx}\right\rangle$.

We now apply integration by parts to \eqref{eq:cost_inter} with $u(z)=e^{z-1}$ and $dv(z)= -\frac{d}{dz}f(h_z(\bx))\,dz$.
Then $du(z)=e^{z-1}dz$ and $v(z)=-f(h_z(\bx))$. Hence,
\begin{align*}
&(1-e^{-1})\langle \nabla F(\bx),-\bx\rangle \\
&\qquad=
\Big[-e^{z-1}f(h_z(\bx))\Big]_{0}^{1}
+\int_0^1 e^{z-1}f(h_z(\bx))\,dz
\\
&\qquad=
-f(h_1(\bx))+\frac{1}{e}f(h_0(\bx))
+\int_0^1 e^{z-1}f(h_z(\bx))\,dz \\
&\qquad\ge 
-f(h(\bx))+\int_0^1 e^{z-1}f(h_z(\bx))\,dz,
\end{align*}
since $h_1(\bx)=h(\bx)$ and $h_0(\bx)=\bzero$.

Again, using \eqref{eq:gradF_clean}, we have
\begin{align*}
    &(1-e^{-1})\langle \grad F(\bx), \by \rangle \\
    &\qquad= \int_{0}^{1} e^{z-1} \langle \grad f(h_z(\bx)) \odot e^{-z\bx}, \by \rangle \, dz \\
    &\qquad\overset{(a)}{=} \int_{0}^{1} e^{z-1} \langle \grad f(h_z(\bx)), \by \odot e^{-z\bx} \rangle \, dz \\
    &\qquad\overset{(b)}{=} \int_{0}^{1} e^{z-1} \langle \grad f(h_z(\bx)), \by \odot (1-h_z(\bx)) \rangle \, dz \\
    &\qquad\overset{(c)}{\ge} \int_{0}^{1} e^{z-1} \left[ f(h_z(\bx) \oplus \by) - f(h_z(\bx)) \right] \, dz
\end{align*}
where (a) uses the identity
$\langle \mathbf{a}\odot \mathbf{b},\,\mathbf{c}\rangle=\langle \mathbf{a},\,\mathbf{b}\odot \mathbf{c}\rangle$, (b) is due to $h_z(\bx) = \bone - e^{-z\bx}$, and (c) uses Lemma~\ref{lem:niv_sub} and the fact that $h_z(\bx) \oplus \by = \bone-(\bone-h_z(\bx))\odot(\bone-\by) = h_z(\bx) + \by \odot (1-h_z(\bx))$.

Lemma~\ref{lem:niv} states that, $\forall \bx, \by \in [0,1]^d$ and $z\in [0,1]$, we have $f(h_z(\bx) \oplus \by) \ge e^{-z} f(\by)$. Thus,

\begin{align}
    &(1-e^{-1})\langle \grad \Fcal(\bx), \by \rangle \nonumber\\
    &\qquad\ge \int_{0}^{1} e^{z-1} \left[ e^{-z} f(\by) - f(h_z(\bx)) \right] \, dz \nonumber\\
    &\qquad\ge f(\by) \int_{0}^{1} e^{z-1} e^{-z} \, dz - \int_{0}^{1} e^{z-1} f(h_z(\bx)) \, dz \nonumber \\
    &\qquad= f(\by) \int_{0}^{1} e^{-1} \, dz - \int_{0}^{1} e^{z-1} f(h_z(\bx)) \, dz \nonumber \\
    &\qquad= \frac{1}{e} f(\by) - \int_{0}^{1} e^{z-1} f(h_z(\bx)) \, dz. \label{}
\end{align}

Summing both terms:
\begin{align*}
    &(1-e^{-1})\langle \gfrak(f,\bx), \by - \bx \rangle \\
    &\qquad= (1-e^{-1})\langle \grad F(\bx), \by - \bx \rangle \\
    &\qquad = (1-e^{-1})\left[ \langle \grad \Fcal(\bx), \by \rangle + \langle \grad \Fcal(\bx), - \bx \rangle\right] \\
    &\qquad = \frac{1}{e} f(\by) - f(h(\bx))
\end{align*}
because the integral terms $\int_{0}^{1} e^{z-1} f(h_z) \, dz$ cancels.    
\end{proof}

\begin{remark}[Expectation form of $\nabla F$]
\label{rem:gradF_expect}
Let $\mathcal{Z}\in[0,1]$ be a random variable with CDF
\begin{equation}
\label{eq:sample}
    \B{P}(\mathcal{Z}\le z)=\int_0^z \frac{e^{u-1}}{1-e^{-1}}\,du,
\end{equation}
equivalently with density $p(z)=\frac{e^{z-1}}{1-e^{-1}}$ on $[0,1]$. Then \eqref{eq:gradF_clean} admits the expectation representation
\begin{equation}\label{eq:gradF_expect}
\nabla F(\bx)=\E_{\mathcal{Z}}\big[\nabla f(h_{\mathcal{Z}}(\bx))\odot e^{-\mathcal{Z}\bx}\big].
\end{equation}
\end{remark}

In order to attain an unbiased and bounded estimate of $\gfrak$, we provide Boosted Query algorithm for Non-monotone DR-submodular functions over Down-closed convex set (BQND), detailed in Algorithm~\ref{alg:bqnd}, and we formally state these properties in Lemma~\ref{lem:bqnd}.
Algorithm~\ref{alg:bqnd} is necessary to provide regret guarantee for our main algorithm, as it satisfies the conditions of Theorem~\ref{thm:regret_transfer}, the Regret Transfer Theorem. Theorem~\ref{thm:regret_transfer} establishes the fundamental regret transfer guarantee for our framework, proving that the difficult problem of non-monotone maximization strictly reduces to the simpler problem of online linear optimization. It formalizes the linearizable reduction, guaranteeing that the regret of our main algorithm is bounded by the regret of the linear base learner on the surrogate functions, up to a scaling constant $\beta$. This allows us to directly inherit the convergence rates of the chosen base solver.

\begin{algorithm}[h]
\caption{Boosted Query algorithm for Non-monotone DR-Submodular functions over Down-closed convex set (BQND)}
\label{alg:bqnd}
\begin{algorithmic}[1]
\STATE \textbf{Input:} Point $\mathbf{x} \in \mathcal{K}$, first-order stochastic oracle $\BF{O}$ for $f$. 
\STATE \textbf{Sampling:} Sample $z\in [0,1]$ according to Equation~\ref{eq:sample}
\STATE \textbf{Query:} Construct point of query $\mathbf{u} = \mathbf{1} - e^{-z\mathbf{x}}$.
\STATE Query the first-order oracle $\BF{O}$ which returns a stochastic sample of $\nabla f(\bu)$, denoted as $\mathbf{v}$.
\STATE \textbf{Output:} Return the estimator $\mathbf{g} =   \mathbf{v} \odot e^{-z\mathbf{x}}$.
\end{algorithmic}
\end{algorithm}

\begin{lemma}[Properties of BQND Estimator]
\label{lem:bqnd}
Let $\mathbf{g}$ be the output of Algorithm \ref{alg:bqnd} for an input $\mathbf{x} \in \mathcal{K}$ and a first-order oracle for function $f$ that satisfies Assumption~\ref{ass:oracle}. Then $\mathbf{g}$ satisfies that:
\[
\E[\mathbf{g}] = \gfrak(f,\bx) = \nabla F(\bx), \text{ and } \|\mathbf{g}\| \le B_1.
\]
i.e., Algorithm~\ref{alg:bqnd} returns an unbiased estimate of $\gfrak$ that is bounded by $B_1$.
\end{lemma}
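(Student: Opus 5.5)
The plan is to prove unbiasedness by conditioning on the sampled $z$ and then using the tower property, and to prove boundedness by a deterministic coordinate-wise comparison.

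\textbf{Unbiasedness.} Condition on the random $z$ drawn in Step~2 of Algorithm~\ref{alg:bqnd}, which has density $p(z)=\frac{e^{z-1}}{1-e^{-1}}$ on $[0,1]$ as in \eqref{eq:sample}. Given $z$ (and the fixed input $\bx$), the query point $\bu = \bone - e^{-z\bx} = h_z(\bx)$ is determined. The multiplier $e^{-z\bx}$ is then a deterministic vector, and Assumption~\ref{ass:oracle} gives $\E[\bv \mid z] = \nabla f(h_z(\bx))$. By linearity of the Hadamard product,
\begin{equation*}
\E[\mathbf{g}\mid z] = \nabla f(h_z(\bx)) \odot e^{-z\bx}.
\end{equation*}
Taking expectation over $z$ with density $p$ gives exactly the integral \eqref{eq:gradF_clean}, which Remark~\ref{rem:gradF_expect} identifies as $\nabla F(\bx)=\gfrak(f,\bx)$. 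This yields $\E[\mathbf{g}] = \gfrak(f,\bx)$.

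\textbf{Boundedness.} This is deterministic. Since $\bx\in\K\subseteq[0,1]^d$ and $z\in[0,1]$, every coordinate of $e^{-z\bx}$ lies in $(0,1]$. Therefore $|g_i| = |v_i|\, e^{-z x_i} \le |v_i|$ for each coordinate $i$, and so $\|\mathbf{g}\|\le\|\bv\|\le B_1$ by the oracle bound in Assumption~\ref{ass:oracle}.

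\textbf{Technical points.} The main thing to check is that the query point is admissible, namely $\bu\in[0,1]^d$, which is where $f$ and the oracle are defined. This holds because each coordinate is $1-e^{-zx_i}\in[0,1)$. Membership of $\bu$ in $\K$ is not needed for this lemma.

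The other point is to justify the exchange of expectation and integral implicit in Remark~\ref{rem:gradF_expect}, i.e.\ differentiation under the integral sign in \eqref{eq:gradF_clean}. This follows because the integrand $p(z)\,\nabla f(h_z(\bx))\odot e^{-z\bx}$ is bounded by $M_1\,p(z)$ under Assumption~\ref{ass:function}. Dominated convergence then applies, so the expectation over $z$ is finite and equals $\nabla F(\bx)$.
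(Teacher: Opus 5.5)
Your proposal is correct and follows essentially the same route as the paper: condition on $z$, use $\E[\bv\mid z]=\nabla f(h_z(\bx))$ to identify $\E[\mathbf{g}]$ with the integral \eqref{eq:gradF_clean}, and bound $\|\mathbf{g}\|\le\|\bv\|\le B_1$ because every entry of $e^{-z\bx}$ lies in $(0,1]$. Your dominated-convergence justification for differentiating under the integral is a detail the paper omits, but it relies on $\|\nabla f(\bu)\|\le M_1$, which Assumption~\ref{ass:function} grants only on $\K$; so, contrary to your remark, $\bu\in\K$ is actually used, and it holds because $\bzero\le\bone-e^{-z\bx}\le z\bx\le\bx$ and $\K$ is down-closed.
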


\begin{proof}
From Algorithm \ref{alg:bqnd}, the output is $\mathbf{g} = \bv \odot e^{-z\mathbf{x}}$, where under Assumption~\ref{ass:oracle}, $\mathbb{E}[\mathbf{v}] =\nabla f( \mathbf{1} - e^{-z\mathbf{x}})$ and $z$ is sampled  according to Equation~\ref{eq:sample}. Taking the expectation over $z$:
\begin{align*}
    \mathbb{E}[\mathbf{g}] = \int_0^1 \frac{e^{z-1}}{1-e^{-1}} \left( e^{-z\mathbf{x}} \odot \nabla f(\mathbf{1} - e^{-z\mathbf{x}}) \right) dz.
\end{align*}
This integral is identical to the gradient derivation of the surrogate function $F(\mathbf{x})$ proved in Theorem \ref{thm:main}. Thus, $\mathbb{E}[\mathbf{g}] = \nabla F(\mathbf{x})=\gfrak(\bx)$.

The norm of the output is:
\vspace{1pt}
\begin{align*}
    \|\mathbf{g}\| &= \|\bv \odot e^{-z\mathbf{x}}\| 
    \le \|\bv\| \cdot \max_{i} |e^{-z x_i}|
\end{align*}
Since $\mathbf{x} \in \mathcal{K} \subseteq [0, 1]^d$ and $z \ge 0$, the term $0 < e^{-z x_i} \le 1$ for all $i$. Therefore, the element-wise shrinking cannot increase the norm. 
Using Assumption~\ref{ass:oracle}, we have
\begin{equation*}
\|\mathbf{g}\| \le B_1 \cdot 1 = B_1.
\qedhere
\end{equation*}
\end{proof}

\section{Regret Results Applied from Linearizable Optimization}
\label{sec:regret}

A key advantage of the linearizable formulation is that it allows us to seamlessly transfer guarantees from Online Linear/Convex Optimization to our non-monotone DR-submodular setting. To begin with, we consider the most common setting, where the adversary provides a first-order noisy oracle as described in Assumption~\ref{ass:oracle}. We propose Algorithm~\ref{alg:main}, a modular reduction framework for maximizing non-monotone DR-submodular functions over down-closed convex sets.

The algorithm requires initializing two subroutines: 
a base learner $\mathcal{A}$, which can be any efficient regret-minimizing algorithm for online linear optimization over $\mathcal{K}$, 
and a query algorithm $\mathcal{G}$ that is fixed to be BQND (Algorithm~\ref{alg:bqnd}). 
While our framework is compatible with any such linear solver, to obtain the specific regret guarantees in this paper, we instantiate $\mathcal{A}$ with Online Gradient Ascent via Separation Oracle (SO-OGA) (Appendix~\ref{app:so-oga}) or Improved Ader (IA) (Appendix~\ref{app:dynamic_algos}). 
The base learner receives first-order feedback and updates its decision to $\mathbf{x}_t$. 
Our main algorithm then plays the action $\hat{\mathbf{x}}_t$ and passes $\mathbf{x}_t$ to the query algorithm, which interacts with the oracle to return an unbiased estimate of $\mathfrak{g}$.

\begin{algorithm}
\caption{Adaptive Projection-free Online Non-monotone DR-Submodular Maximization over Down-closed Convex Sets}
\label{alg:main}
\begin{algorithmic}[1]
\STATE \textbf{Input:} Horizon $T$, Constraint set $\mathcal{K}$, Stepsize $\eta$.
\STATE \textbf{Initialize:} 
\STATE \quad Base Learner $\mathcal{A}$: Any online linear optimization algorithm with sublinear regret.\footnotemark
\STATE \quad Query Algorithm $\mathcal{G}$: \textsc{BQND} (Algorithm~\ref{alg:bqnd}). 
\FOR{$t = 1, \dots, T$}
    \STATE Receive action $\mathbf{x}_t$ from Base Learner $\mathcal{A}$
    \STATE Play action $\hat{\bx}_t = \mathbf{1} - e^{-\mathbf{x}_t}$
    \STATE Adversary selects a function $f_t$ and a first-order query oracle $\BF{O}_t$.
    \STATE Run query algorithm: $\BF{o}_t \leftarrow \mathcal{G}(\BF{O}_t, \mathbf{x}_t)$.
    \STATE Pass $\BF{o}_t$ to $\mathcal{A}$ to update its state.
\ENDFOR
\end{algorithmic}
\end{algorithm}
\footnotetext{Specific instantiations of $\C{A}$ for our results are given in the propositions.}

Algorithm \ref{alg:main} follows the reduction-to-linear-optimization paradigm for linearizable functions, instantiating the Online Maximization By Quadratization (OMBQ) meta-algorithm proposed by \citet{pedramfar2024linear} (see Appendix~\ref{app:ombq}) with our specific exponential mapping $h(\mathbf{x}) = \mathbf{1} - e^{-\mathbf{x}}$ and the BQND query algorithm. 
Because Theorem~\ref{thm:main} successfully establishes the structural $1/e$-linearizability of our domain, the subsequent regret guarantees across various feedback models do not require independent non-convex analyses. Instead, the bounds presented in the following subsections and summarized in Table~\ref{tab:comparison_master} follow naturally as direct mechanical corollaries of plugging Theorem~\ref{thm:main} into the OMBQ framework alongside appropriate linear base learners.
This configuration yields the first algorithm for this domain with improved regret permitting $\mathcal{O}(1)$ oracle queries per round.

\subsection{Static Regret}

Our framework (Algorithm~\ref{alg:main}) works with any base learner $\mathcal{A}$ that minimizes regret for linear functions. To derive improved regret rate that is efficient, we instantiate $\mathcal{A}$ with the projection-free learner SO-OGA in Appendix~\ref{app:so-oga}. This algorithm is proposed by \citet{garber2022new} and later refined by \citet{pedramfar2024linear}. The result is as follows:

\begin{proposition}[Static Regret]
\label{prop:static-regret}
Let $\{f_t\}_{t=1}^T$ be a sequence of $M_1$-Lipschitz non-monotone DR-submodular functions. Instantiating base learner to be SO-OGA as described in Algorithm~\ref{alg:so_oga_base}, Algorithm \ref{alg:main} achieves a $1/e$-approximation static regret bounded by:
\begin{equation}
    \E\left[ \alpha \max_{\bu \in \mathcal{K}}\sum_{t=1}^T f_t(\bu) - \sum_{t=1}^T f_t(\hbx_t) \right] 
    \le \CO(M_1 \sqrt{T})
\end{equation}
for any fixed comparator $\by \in \K$. The algorithm requires only $\CO(1)$ gradient queries per round.
\end{proposition}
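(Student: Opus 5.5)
The proof reduces static $1/e$-regret to the linear regret of SO-OGA through the linearizability inequality of Theorem~\ref{thm:main}, with the stochastic surrogate gradients supplied by BQND (Lemma~\ref{lem:bqnd}).

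\textbf{Step 1: per-round reduction.} Fix a comparator $\by \in \K$ and write $\hbx_t = h(\bx_t)$, where $\bx_t \in \K$ is the iterate of the base learner. We need $\hbx_t \in \K$, which holds because $\bzero \le \bone - e^{-\bx_t} \le \bx_t$ coordinatewise and $\K$ is down-closed. Applying Theorem~\ref{thm:main} to $f_t$ gives
\begin{equation*}
\tfrac{1}{e} f_t(\by) - f_t(\hbx_t) \le (1-e^{-1})\langle \nabla F_t(\bx_t), \by - \bx_t\rangle .
\end{equation*}
Summing over $t$ and taking expectations, the left side becomes the $1/e$-regret against $\by$. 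The right side is $(1-e^{-1})$ times the expected linear regret of the sequence $\bx_t$ against the linear rewards $\langle \nabla F_t(\cdot),\cdot\rangle$.

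\textbf{Step 2: replace true gradients by BQND outputs.} The base learner only observes $\oo_t = \mathbf{g}_t$, produced by BQND at $\bx_t$ with fresh randomness (the sample $z$ and the oracle noise). Lemma~\ref{lem:bqnd} gives $\E[\mathbf{g}_t \mid \mathcal{H}_{t}] = \nabla F_t(\bx_t)$, where $\mathcal{H}_t$ is the history up to the choice of $\bx_t$. Since $\bx_t$ is $\mathcal{H}_t$-measurable and $\by$ is fixed, the tower rule gives
\begin{equation*}
\E\langle \nabla F_t(\bx_t), \by - \bx_t\rangle = \E\langle \mathbf{g}_t, \by - \bx_t\rangle .
\end{equation*}
This step assumes an oblivious adversary, or at least that $f_t$ is chosen before the round-$t$ randomness is drawn. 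It suffices to bound $\E\sum_t \langle \mathbf{g}_t, \by - \bx_t\rangle$, which is the realized linear regret of SO-OGA on the vectors $\mathbf{g}_t$ against a fixed $\by$. This is exactly the regret-transfer principle of Theorem~\ref{thm:regret_transfer} / OMBQ, so one may also cite it directly.

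\textbf{Step 3: invoke the SO-OGA guarantee.} Lemma~\ref{lem:bqnd} gives $\|\mathbf{g}_t\| \le B_1$ deterministically, and $\K$ has diameter $D$ (Assumption~\ref{ass:set}). The known SO-OGA bound (Garber--Kretzu, as refined by Pedramfar--Aggarwal) then gives linear regret $\CO(D B_1 \sqrt{T})$ against every fixed $\by$, using $\CO(1)$ separation-oracle calls per round in the amortized sense stated there. Because the bound is uniform in $\by$, we may take the maximizer $\by^* = \arg\max_{\bu\in\K}\sum_t f_t(\bu)$. This is legitimate since $\by^*$ is deterministic under an oblivious adversary. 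Multiplying by $(1-e^{-1})$ yields $\CO(\sqrt{T})$, written as $\CO(M_1\sqrt{T})$ when $B_1$ is of order $M_1$ and the constants $D$ and $1-e^{-1}$ are absorbed. BQND makes exactly one oracle call per round, which gives the query claim.

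\textbf{Main obstacle.} The delicate point is Step 2: swapping the unbiased estimate into the linear regret while the comparator is fixed and the adversary is oblivious, so that conditional unbiasedness transfers through the regret bound. Here I would rely on the regret-transfer theorem rather than re-derive it. A secondary check is that SO-OGA's bound holds for any fixed $\by \in \K$ with gradient norm $B_1$ and needs no further smoothness.
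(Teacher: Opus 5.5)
Your proposal is correct and follows the same route as the paper. The paper's proof simply combines the SO-OGA bound (Theorem~\ref{thm:so_oga_regret}), the unbiasedness and boundedness of BQND (Lemma~\ref{lem:bqnd}), and the regret-transfer theorem (Theorem~\ref{thm:regret_transfer}); you unpack that transfer theorem by applying Theorem~\ref{thm:main} per round and replacing $\nabla F_t(\bx_t)$ with the BQND output via the tower rule under an oblivious adversary. You also make explicit two points the paper leaves implicit: that $h(\bx_t)\in\K$ by down-closedness, and that the bound naturally comes out as $\CO(B_1\sqrt{T})$, which matches $\CO(M_1\sqrt{T})$ only when $B_1=\CO(M_1)$.
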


\begin{proof}
Since we choose the base learner to be SO-OGA, it follows from Theorem~\ref{thm:so_oga_regret} that 
\begin{align*}
    \mathcal{R}_{1}^{\text{SO-OGA}} = \CO(M_1 T^{1/2})
\end{align*}
In Lemma~\ref{lem:bqnd}, we proved that Algorithm~\ref{alg:bqnd} returns an unbiased bounded estimate of $\gfrak$. Thus, using Theorem~\ref{thm:regret_transfer}, we have:
\begin{align*}
    \E\left[ \alpha \max_{\bu \in \mathcal{K}}\sum_{t=1}^T f_t(\bu) - \sum_{t=1}^T f_t(\hbx_t) \right] 
    &\le \beta \mathcal{R}_{1}^{\text{SO-OGA}} \\
    &= \CO(M_1 T^{1/2})
    \qedhere
\end{align*}
\end{proof}

\subsection{Adaptive and Dynamic Regrets}

Since \textsc{SO-OGA} is known to minimize adaptive regret for linear functions \citep{garber2022new,pedramfar2024linear}, Algorithm~\ref{alg:main} instance in Proposition~\ref{prop:static-regret} automatically achieves an adaptive regret bound described below:
\begin{proposition}[Adaptive Regret]
\label{prop:adaptive-regret}
Let $\{f_t\}_{t=1}^T$ be a sequence of $M_1$-Lipschitz non-monotone DR-submodular functions over down-closed convex sets.
Instantiating base learner to be SO-OGA as described in Algorithm~\ref{alg:so_oga_base}, Algorithm~\ref{alg:main} achieves a $1/e$-approximation adaptive regret bounded by:
\begin{equation}
    \mathcal{AR}_{1/e}(T) \le \CO(T^{1/2})
\end{equation}
This is the first adaptive regret guarantee for non-monotone DR-submodular maximization over down-closed sets.
\end{proposition}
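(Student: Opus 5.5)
The plan mirrors the proof of Proposition~\ref{prop:static-regret}, replacing the static regret of the base learner with its interval regret. Fix an arbitrary contiguous interval $[s,e]\subseteq[T]$ and an arbitrary comparator $\by\in\K$. For each round $t\in[s,e]$, Theorem~\ref{thm:main} applied to $f_t$ with $\bx=\bx_t$ gives the pointwise inequality
\[
\tfrac{1}{e} f_t(\by) - f_t(\hbx_t) \le (1-e^{-1})\langle \gfrak(f_t,\bx_t), \by - \bx_t\rangle ,
\]
since $\hbx_t = h(\bx_t)$ by construction in Algorithm~\ref{alg:main}. Summing over $t=s,\dots,e$ bounds the interval $1/e$-regret against $\by$ by $(1-e^{-1})$ times the linear interval regret $\sum_{t=s}^e \langle \gfrak(f_t,\bx_t), \by-\bx_t\rangle$. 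This step is purely deterministic and requires no new analysis.

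Next, I pass from the true surrogate gradients to the BQND outputs $\oo_t$. By Lemma~\ref{lem:bqnd}, we have $\E[\oo_t\mid \bx_t] = \gfrak(f_t,\bx_t)$ and $\|\oo_t\|\le B_1$. Because $\bx_t$ is determined by the history before round $t$, and $\by$ is fixed (the comparator in $\mathcal{AR}$ is chosen before taking expectations over the algorithm, exactly as in the regret transfer of Theorem~\ref{thm:regret_transfer}), the tower property gives
\[
\E\big[\langle \gfrak(f_t,\bx_t),\by-\bx_t\rangle\big]=\E\big[\langle \oo_t,\by-\bx_t\rangle\big].
\]
Hence the expected interval $1/e$-regret is at most $(1-e^{-1})\,\E\big[\sum_{t=s}^e\langle \oo_t,\by-\bx_t\rangle\big]$. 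This quantity is exactly the interval regret of the base learner $\mathcal{A}$ on the linear rewards $\langle \oo_t,\cdot\rangle$, with gradient norms bounded by $B_1$. Equivalently, I would invoke the adaptive version of Theorem~\ref{thm:regret_transfer} from \citet{pedramfar2024linear}, which transfers any interval-regret bound of the OLO learner.

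Finally, I would invoke the adaptive regret guarantee of SO-OGA from \citet{garber2022new,pedramfar2024linear}, referenced in the appendix alongside Theorem~\ref{thm:so_oga_regret}. It states that for linear losses with bounded gradients over a set of diameter $D$, $\sup_{[s,e]}\max_{\by}\sum_{t=s}^e\langle \oo_t,\by-\bx_t\rangle = \CO(B_1 D\sqrt{T})$. Taking the supremum over $[s,e]$ and the maximum over $\by$ then yields $\mathcal{AR}_{1/e}(T)\le \CO(T^{1/2})$.

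The main obstacle is the interchange of the supremum over intervals and comparators with the expectation. The definition of $\mathcal{AR}$ places the supremum outside $\E[f_t(\bx_t)]$ while the comparator is deterministic, so the unbiasedness argument works for each fixed $(s,e,\by)$. The bound is then uniform because the SO-OGA guarantee holds for every interval and every realization. I would check that the SO-OGA interval bound is stated for arbitrary (adaptively chosen) bounded linear losses, and that its step size does not depend on $[s,e]$; both hold for SO-OGA's fixed step size $\eta\propto 1/\sqrt{T}$.
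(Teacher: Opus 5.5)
Your proposal is correct and takes the same route as the paper. The paper's proof combines three ingredients: the unbiased, $B_1$-bounded BQND estimates (Lemma~\ref{lem:bqnd}), the regret-transfer Theorem~\ref{thm:regret_transfer} built on the linearization inequality of Theorem~\ref{thm:main}, and the adaptive regret bound of SO-OGA (Theorem~\ref{thm:so_oga_regret}). The only difference is that you carry out the regret-transfer step explicitly on each interval $[s,e]$ with a fixed comparator, which spells out the interval version of Theorem~\ref{thm:regret_transfer} that the paper's two-line proof uses implicitly.
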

\begin{proof}
If we choose the base learner to be SO-OGA, it follows from Theorem~\ref{thm:so_oga_regret} that 
\begin{align*}
    \mathcal{AR}_{1}^{\text{SO-OGA}} = \CO(M_1 T^{1/2})
\end{align*}
In Lemma~\ref{lem:bqnd}, we proved that Algorithm~\ref{alg:bqnd} returns an unbiased bounded estimate of $\gfrak$. Thus, using Theorem~\ref{thm:regret_transfer}, we have:
\begin{align*}
\mathcal{AR}_{1/e}(T)
&\le \beta \mathcal{R}_{1}^{\text{SO-OGA}} 
= \CO(T^{1/2}).
\qedhere
\end{align*}
\end{proof}

To handle non-stationary environments, we instantiate Algorithm~\ref{alg:main} with \textsc{Improved Ader} algorithm (Algorithm~\ref{alg:improved_ader}) in Appendix \ref{app:dynamic_algos} as the base linear learner $\mathcal{A}$. 
This algorithm is proposed by \citet{zhang2018adaptive} and later refined by \citet{pedramfar2024linear}. 
The result is as follows:

\begin{proposition}[Dynamic Regret Guarantees]
\label{prop:dynamic-regret}
Let $P_T = \sum_{t=1}^T \|\mathbf{x}^*_t - \mathbf{x}^*_{t-1}\|$ denote the path length of the sequence of optimal minimizers for the surrogate linear functions. 
Let $\{f_t\}_{t=1}^T$ be a sequence of $M_1$-Lipschitz non-monotone DR-submodular functions over down-closed convex sets.
Instantiating base learner to be IA as described in Algorithm~\ref{alg:improved_ader}, Algorithm~\ref{alg:main} achieves a $1/e$-approximation dynamic regret bounded by:
\begin{equation}
    \mathcal{DR}_{1/e}(T) \le \tilde{O}\left( \sqrt{T(1 + P_T)} \right).
\end{equation}
\end{proposition}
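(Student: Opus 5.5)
The dynamic-regret bound will follow from the same regret-transfer argument used for Propositions~\ref{prop:static-regret} and~\ref{prop:adaptive-regret}, now applied with time-varying comparators. For each round $t$ I would apply Theorem~\ref{thm:main} to $f_t$, with $\bx=\bx_t$ and $\by=\by_t$, where $\by_t\in\K$ is any comparator. Since $\hbx_t=h(\bx_t)$, this gives
\begin{equation*}
\tfrac{1}{e} f_t(\by_t) - f_t(\hbx_t) \le (1-e^{-1})\langle \gfrak(f_t,\bx_t), \by_t-\bx_t\rangle .
\end{equation*}
The inequality holds pointwise for every pair, so it holds for a comparator that changes with $t$. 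Summing over $t$ and choosing $\by_t=\bu_t^*$, the $1/e$-dynamic regret is at most $\beta$ times the dynamic regret of the linear losses $\langle \gfrak(f_t,\bx_t),\cdot\rangle$ against the sequence $\{\bu_t^*\}$. This is the dynamic-regret version of Theorem~\ref{thm:regret_transfer}.

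Next, the base learner does not see the exact vector $\gfrak(f_t,\bx_t)$; it sees the BQND output $\oo_t$. By Lemma~\ref{lem:bqnd}, $\E[\oo_t\mid \bx_t]=\gfrak(f_t,\bx_t)$ and $\|\oo_t\|\le B_1$. The comparators $\bu_t^*$ are fixed in advance, being determined by the adversary's functions, and $\bx_t$ is measurable with respect to the past. Conditioning on the history therefore gives
\begin{equation*}
\E\langle \gfrak(f_t,\bx_t),\bu_t^*-\bx_t\rangle=\E\langle \oo_t,\bu_t^*-\bx_t\rangle .
\end{equation*}
So the expected dynamic regret on the true surrogates equals the expected dynamic regret of IA on the linear losses $\langle \oo_t,\cdot\rangle$, which have bounded gradients. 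This assumes an oblivious adversary, or at least that $\bu_t^*$ does not depend on the learner's internal randomness; I would state this explicitly.

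Finally, I would invoke the Improved Ader guarantee from Appendix~\ref{app:dynamic_algos}. For linear losses with gradient norm at most $B_1$ over a set of diameter $D$, it gives dynamic regret $O(\sqrt{T(1+P_T)})$ against any comparator sequence with path length $P_T$. When the experts are projection-free (SO-OGA style), this picks up logarithmic factors, which the $\tilde O$ absorbs. Multiplying by $\beta=1-e^{-1}$ yields the claim.

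The main obstacle is the bookkeeping of which path length appears. The bound that comes out is in terms of the path length of the comparators $\bu_t^*$ actually used in $\mathcal{DR}_{1/e}$, which is what the regret definition needs. The proposition's wording, ``minimizers of the surrogate linear functions'', has to be read as this comparator path, or else an inequality relating the two must be argued. I would adopt the comparator-sequence reading, since the transfer inequality holds for arbitrary $\by_t$.
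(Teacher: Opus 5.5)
Your proposal is correct and follows the paper's route. The paper combines the Improved Ader dynamic-regret bound (Theorem~\ref{thm:dynamic_regret_bound}), the unbiasedness and boundedness of BQND (Lemma~\ref{lem:bqnd}), and the regret-transfer theorem (Theorem~\ref{thm:regret_transfer}) applied to the comparator sequence $\bu$; you do the same, except that you derive the transfer step inline from Theorem~\ref{thm:main} with time-varying comparators.

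Your extra bookkeeping matches what the paper's argument implicitly needs: conditioning on the history under an oblivious adversary, feeding IA the bound $B_1$ rather than $M_1$, and reading $P_T$ as the path length of $\{\bu_t^*\}$. Your aside that projection-free experts cost only logarithmic factors is unneeded and unproven, since the exact-projection IA of Algorithm~\ref{alg:improved_ader} already gives an $O(\cdot)$ bound.
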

\begin{proof}
If we choose the base learner to be IA for linear functions, it follows from Theorem~\ref{thm:dynamic_regret_bound} that 
\begin{align*}
    \mathcal{R}^{\text{IA}}_{1,\BF{L}}(\mathbf{u}) = O\left(M_1 \sqrt{T(1 + P_T(\mathbf{u}))}\right)
\end{align*}
In Lemma~\ref{lem:bqnd}, we proved that Algorithm~\ref{alg:bqnd} returns an unbiased bounded estimate of $\gfrak$. Thus, using Theorem~\ref{thm:regret_transfer}, we have:
\begin{align*}
    \mathcal{DR}_{1/e}(\bu)
    &\le \beta \mathcal{R}_{1}^{\text{IA}}(\mathbf{u}) 
    = \tilde\CO(\sqrt{T(1 + P_T)})
    \qedhere
\end{align*}
\end{proof}

\subsection{Semi-Bandit, Zeroth-order Full-Information, and Bandit Feedback}
Recall that for Algorithm~\ref{alg:main}, we assumed first-order gradient, and the queries determined by the query algorithm BQND are non-trivial. Thus, the results we obtained in Theorem~\ref{prop:static-regret}, ~\ref{prop:adaptive-regret}, and ~\ref{prop:dynamic-regret} are given for first-order full-information feedback.
By decoupling the non-convex query algorithm from the linear base learner, our modular design allows seamless extensions to restrictive feedback settings.
In this section, we show that by applying meta-algorithms developed for linearizable functions by \citet{pedramfar2024linear}, our regret guarantees naturally transfer to Semi-Bandit, Zeroth-Order Full-Information, and Bandit settings. Furthermore, we provide the first theoretical analysis extending these results to Adaptive and Dynamic regret measures, establishing a unified projection-free framework that is robust to both limited information and non-stationary feedback.

Given first-order oracles, when only trivial queries are permitted, we say that the algorithm handles semi-bandit feedback. Thus, we apply SFTT (Algorithm~\ref{alg:sftt_meta}), with $\C{A}^{action}$ being the SO-OGA and $\C{A}^{query}$ being BQND. Thus, applying Lemma~\ref{lem:sftt_reduction} with $\eta=1/2$ due to Proposition~\ref{prop:static-regret} and \ref{prop:adaptive-regret}, we have the following result:
\begin{proposition}[Semi-Bandit Guarantees]
\label{prop:semi_bandit}
Let $\{f_t\}_{t=1}^T$ be a sequence of non-monotone DR-submodular functions over down-closed convex sets.
If we instantiate \textsc{SFTT} meta-algorithm (Algorithm~\ref{alg:sftt_meta}) with block size $L = T^{1/3}$, using \textsc{SO-OGA} as the base learner $\mathcal{A}^{\text{action}}$ and \textsc{BQND} as the query algorithm $\mathcal{A}^{\text{query}}$, the resulting algorithm achieves the following regret bounds against $\{f_t\}_{t=1}^T$:
\begin{equation*}
\mathbb{E}[\mathcal{R}(T)] \le O(T^{2/3}) \quad \text{and} \quad \mathcal{AR}_{1/e}(T) \le O(T^{2/3}).
\end{equation*}
\end{proposition}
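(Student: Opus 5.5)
The plan is to treat Proposition~\ref{prop:semi_bandit} as a corollary of the semi-bandit reduction Lemma~\ref{lem:sftt_reduction} applied to the first-order full-information guarantees already in hand. The first-order full-information algorithm (Algorithm~\ref{alg:main} with SO-OGA as base learner and BQND as query algorithm) has static and adaptive $1/e$-regret $\CO(T^{1/2})$ by Propositions~\ref{prop:static-regret} and~\ref{prop:adaptive-regret}. So it has regret exponent $\eta = 1/2$ with a single non-trivial query per round. SFTT converts it to a trivial-query algorithm. It partitions $[T]$ into $T/L$ blocks of length $L$, runs one copy of the full-information algorithm over the blocks, and views each block as one round for the averaged function $\bar f_q = \frac1L\sum_{t\in\text{block }q} f_t$. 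Inside each block it plays the action $\hbx_q = h(\bx_q)$ at all but one uniformly random time step. At that step it instead plays the BQND query point $\bu = \bone - e^{-z\bx_q}$ and uses the returned gradient as the block's feedback.

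The key steps, in order, are the following.

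First, I check that this random-time sample gives an unbiased, bounded estimate of $\gfrak(\bar f_q, \bx_q) = \nabla \bar F_q(\bx_q)$. The index of the exploration step is uniform and independent of the adversary's choices within the block, so averaging over it yields $\frac1L\sum_t \nabla F_t(\bx_q)$. Combining this with Lemma~\ref{lem:bqnd} gives unbiasedness, and the bound $\|\mathbf{g}\|\le B_1$ carries over directly.

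Second, I apply Theorem~\ref{thm:regret_transfer} at the block level, using the linearizability of Theorem~\ref{thm:main}. Averages of non-negative DR-submodular functions are again in the class, and $\gfrak$ is linear in $f$. The block-level regret scaled by $L$ is therefore $L\cdot\CO((T/L)^{1/2})$.

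Third, I bound the exploration cost: one deviating step per block costs at most $\CO(M_1 D)$, which totals $\CO(T/L)$ over all blocks. Balancing $L\sqrt{T/L} = \sqrt{TL}$ against $T/L$ gives $L=T^{1/3}$ and a total of $\CO(T^{2/3})$, which is the general rule $T^{(1+\eta)/(1+\eta)\cdots}$ that Lemma~\ref{lem:sftt_reduction} presumably packages, specialized to $\eta = 1/2$.

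For adaptive regret, an arbitrary interval $[s,e]$ consists of a union of whole blocks plus at most two partial blocks at its ends. The partial blocks cost $\CO(L)$. On the whole blocks, the adaptive guarantee of SO-OGA at the block level applies, giving $\sqrt{TL}+T/L+L = \CO(T^{2/3})$.

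The main obstacle is the unbiasedness argument in the first step. It needs the exploration index to be hidden from, or independent of, the adversary, and the base learner's iterate $\bx_q$ to be fixed throughout the block. I would handle this by conditioning on the history up to the start of the block and assuming an oblivious adversary, or at least a block-measurable one. These are exactly the hypotheses under which Lemma~\ref{lem:sftt_reduction} is stated, so the main work is verifying that BQND together with the uniform random index satisfies them.
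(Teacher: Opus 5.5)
Your proposal is correct and follows the paper's route. The paper simply invokes Lemma~\ref{lem:sftt_reduction} with $\eta = 1/2$ (from Propositions~\ref{prop:static-regret} and~\ref{prop:adaptive-regret}) and $K = O(1)$, since BQND makes a single query. This gives $L = T^{(1-\eta)/(2-\eta)} = T^{1/3}$ and regret $O(T^{1/(2-\eta)}) = O(T^{2/3})$, which is exactly your $\sqrt{TL} + T/L$ balance; your block-averaging, random-index unbiasedness and partial-block accounting unpack what that lemma already packages. The only blemish is the garbled general exponent ``$T^{(1+\eta)/(1+\eta)\cdots}$'': the lemma's rule for $K = O(1)$ is $T^{1/(2-\eta)}$, which your explicit computation reproduces.
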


When provided only zeroth-order oracles (i.e., oracles returning value estimates for functions instead of gradient estimates), we say that the algorithm handles zeroth-order full-information feedback if the queries are non-trivial, or bandit feedback if the queries are trivial. For zeroth-order full-information feedback, we apply \textsc{FOTZO} (Algorithm~\ref{alg:fotzo_meta}), with $\C{A}$ being the SO-OGA and $\C{A}^{query}$ being BQND. Thus, applying Lemma~\ref{lem:fotzo_reduction} with $\eta=1/2$ due to Proposition~\ref{prop:static-regret} and \ref{prop:adaptive-regret}, we have the following result:
\begin{proposition}[Zeroth-Order Full-Information Guarantees]
\label{prop:zo_full_info}
Let $\{f_t\}_{t=1}^T$ be a sequence of non-monotone DR-submodular functions over down-closed convex sets.
Let Algorithm $\mathcal{A}$ be the instantiation of the \textsc{FOTZO} (Algorithm 6) using \textsc{BQND} as $\mathcal{A}^{\text{query}}$ equipped with base algorithm \textsc{SO-OGA}. By employing a one-point gradient estimator with smoothing radius $\delta$, the resulting Algorithm~$\mathcal{A}$ achieves the following regret bounds against $\{f_t\}_{t=1}^T$:
\begin{equation*}
\mathbb{E}[\mathcal{R}(T)] \le O(T^{3/4}) \quad \text{and} \quad \mathcal{AR}_{1/e}(T) \le O(T^{3/4}).
\end{equation*}
\end{proposition}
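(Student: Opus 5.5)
We will obtain Proposition~\ref{prop:zo_full_info} in the same way as Propositions~\ref{prop:static-regret} and~\ref{prop:adaptive-regret}: first establish the first-order full-information guarantee, then pass it through the zeroth-order meta-algorithm \textsc{FOTZO} using its reduction lemma (Lemma~\ref{lem:fotzo_reduction}). No new non-convex analysis is needed, since Theorem~\ref{thm:main} already gives $1/e$-linearizability with $h(\bx)=\bone-e^{-\bx}$, $\beta=1-e^{-1}$ and surrogate $\gfrak(f,\bx)=\nabla F(\bx)$.

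\textbf{Step 1: first-order base algorithm.} The input to \textsc{FOTZO} is Algorithm~\ref{alg:main} with \textsc{SO-OGA} as $\C{A}$ and \textsc{BQND} as $\C{A}^{\text{query}}$. By Lemma~\ref{lem:bqnd}, the query output is an unbiased estimate of $\gfrak$ with norm at most $B_1$. Theorems~\ref{thm:so_oga_regret} and~\ref{thm:regret_transfer} then give static and adaptive $1/e$-regret $\CO(T^{1/2})$, i.e.\ exponent $\eta=1/2$. This holds for any oracle satisfying Assumption~\ref{ass:oracle}, and we will use that generality in Step~2.

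\textbf{Step 2: simulate the gradient oracle from values.} \textsc{FOTZO} answers each gradient query at the point $\bu=\bone-e^{-z\bx}$ requested by \textsc{BQND} with the one-point spherical estimator
\[
\tfrac{d}{\delta} f_t(\bu+\delta \mathbf{s})\,\mathbf{s}, \qquad \mathbf{s}\sim\mathrm{Unif}(\mathbb{S}^{d-1}).
\]
This is an unbiased gradient of the smoothed function $\hat f_{t,\delta}$, and its norm is $\CO(d/\delta)$. Three facts about the smoothed function are needed:
\begin{enumerate}[leftmargin=*,noitemsep]
\item $\hat f_{t,\delta}$ is still DR-submodular, because it is an average of translates of $f_t$.
\item $|\hat f_{t,\delta}-f_t|\le M_1\delta$ by Lipschitzness.
\item The perturbed points stay feasible. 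Following \citet{pedramfar2024linear}, we run the base learner on a shrunk set $\K_\delta\subseteq\K$ whose $\delta$-neighbourhood in the relevant affine hull lies in $\K$. Down-closedness lets us take $\K_\delta$ as a shifted and scaled copy, and the comparator error from $\K$ to $\K_\delta$ is $\CO(\delta)$ per round.
\end{enumerate}
Applying Step~1 to $\hat f_{t,\delta}$ on $\K_\delta$ with gradient bound $B_1=\CO(1/\delta)$ gives regret $\CO(\sqrt{T}/\delta)$. Adding the $\CO(\delta T)$ smoothing and shrinking bias, Lemma~\ref{lem:fotzo_reduction} yields the bound $\CO(T^{\eta}/\delta+\delta T)$.

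\textbf{Step 3: balance.} Choosing $\delta=T^{-1/4}$ gives $\CO(T^{3/4})$ for static regret. The same argument restricted to any interval $[s,e]$ gives the adaptive bound, because the \textsc{SO-OGA} guarantee is interval-wise and the bias terms only shrink on sub-intervals.

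\textbf{Main obstacle.} The delicate step is checking the hypotheses of Lemma~\ref{lem:fotzo_reduction}. The query point $\bone-e^{-z\bx}$ and the played point $h(\bx)$ both lie in $\K$ only because of down-closedness, since $h(\bx)\le\bx$ coordinatewise. The concern is that shrinking to $\K_\delta$ could interact badly with this. We would argue that $h$ maps $\K_\delta$ into $\K_\delta$ by the same coordinatewise-domination argument, because $\K_\delta$ inherits down-closedness relative to its shifted origin. We would also check that the DR-submodularity used in Lemma~\ref{lem:niv} holds for $\hat f_{t,\delta}$ on the region where it is evaluated.
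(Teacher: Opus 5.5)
Your overall route is the paper's. Its proof is a single application of Lemma~\ref{lem:fotzo_reduction} with $\eta=1/2$, taken from Propositions~\ref{prop:static-regret} and~\ref{prop:adaptive-regret}. That means $\delta=T^{-1/4}$ and a bound of $\CO(T^{(1+\eta)/2})=\CO(T^{3/4})$.

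\textbf{The gap.} It sits in the feasibility step you flag at the end. The paper never checks it, and the argument you propose for it is false. The shrunk set $\hat{\K}_\delta=(1-\delta/r)\K+\frac{\delta}{r}\mathbf{c}$ is down-closed only relative to $\mathbf{o}'=\frac{\delta}{r}\mathbf{c}$: every point of it dominates $\mathbf{o}'$. But $h$ and \textsc{BQND} contract toward the true origin. For example, $\bx=\mathbf{o}'\in\hat{\K}_\delta$ gives $h(\bx)=\bone-e^{-\mathbf{o}'}$, which is strictly below $\mathbf{o}'$ wherever $c_i>0$, so $h(\bx)\notin\hat{\K}_\delta$.

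The played point is harmless, since it lies in $\K$ by down-closedness of $\K$ itself. The queries are not. The query $\bu=\bone-e^{-z\bx}$ tends to $\bzero$ as $z\to 0$, and $z$ has density $1/(e-1)$ at $0$. So with positive probability the \textsc{FOTZO} query $\bu+\delta\mathbf{s}$ has negative coordinates and falls outside the domain of $f_t$.

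For the same reason your item~1 is not enough. Theorem~\ref{thm:main} invokes Lemma~\ref{lem:niv} at $h_z(\bx)\oplus\by$ for arbitrary $\by$, so it needs the smoothed function on all of $[0,1]^d$. But $\hat f_{t,\delta}$ is only defined at points whose $\delta$-ball stays inside $[0,1]^d$.

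\textbf{The repair.} Conjugate the first-order algorithm by $\phi(\by)=(1-\delta/r)\by+\frac{\delta}{r}\mathbf{c}$ instead of running it on $\hat{\K}_\delta$.
\begin{enumerate}[leftmargin=*,noitemsep]
\item Write $\mathbb{B}$ for the unit ball. Then $\phi(\by)+\delta\mathbf{v}=(1-\delta/r)\by+\frac{\delta}{r}(\mathbf{c}+r\mathbf{v})$. Together with $\mathbf{c}+r\mathbb{B}\subseteq\K\subseteq[0,1]^d$, this gives $\phi(C)+\delta\mathbb{B}\subseteq C$ for both $C=\K$ and $C=[0,1]^d$.
\item Hence $g_t=\hat f_{t,\delta}\circ\phi$ is non-negative, differentiable and DR-submodular on all of $[0,1]^d$: its gradient $(1-\delta/r)\,\nabla\hat f_{t,\delta}\circ\phi$ is coordinatewise non-increasing. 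So Theorem~\ref{thm:main} applies to $g_t$ verbatim.
\item Run Algorithm~\ref{alg:main} unchanged on $\K$ against $g_t$. Every \textsc{BQND} query satisfies $\bzero\le\bone-e^{-z\by}\le\by$, so it lies in $\K$. The real query $\phi(\bone-e^{-z\by})+\delta\mathbf{s}$ therefore lies in $\K$.
\item The vector $(1-\delta/r)\frac{d}{\delta}f_t(\phi(\bu)+\delta\mathbf{s})\,\mathbf{s}$ is an unbiased gradient of $g_t$ at $\bu$, with norm $\CO(1/\delta)$.
\item The actual play is $\phi(h(\by_t))\in\K$. The bias terms are $|g_t(\by)-f_t(\by)|\le M_1\delta(1+D/r)$ for $\by\in\K$ and $|f_t(\phi(\mathbf{w}))-g_t(\mathbf{w})|\le M_1\delta$. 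Together they give $\CO(\delta T)$ bias on any interval.
\end{enumerate}
After this, your balancing at $\delta=T^{-1/4}$ goes through for both static and adaptive regret. This is exactly the $\hat{\mathcal{U}}$-shrinkage of Lemma~\ref{lem:fotzo_reduction}, made compatible with $h$ and \textsc{BQND}.
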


If only a zeroth-order oracle is provided, and only trivial queries are allowed, we say that the algorithm handles bandit feedback. For such limited feedback, we apply \textsc{STB} meta-algorithm to the algorithm instantiated in Proposition~\ref{prop:semi_bandit}, and applying Lemma~\ref{lem:stb_reduction}, we obtain the following results:
\begin{proposition}[Bandit Guarantees]
\label{prop:bandit}
Let $\{f_t\}_{t=1}^T$ be a sequence of non-monotone DR-submodular functions over down-closed convex sets.
Let Algorithm $\mathcal{A}$ be the semi-bandit algorithm instantiated in Proposition~\ref{prop:semi_bandit}. Applying STB (Algorithm~\ref{alg:stb_meta}), the resulting Algorithm achieves the following regret bounds against $\{f_t\}_{t=1}^T$:
\begin{equation*} 
\mathbb{E}[\mathcal{R}(T)] \le O(T^{2/3})
\quad \text{and} \quad \mathcal{AR}_{1/e}(T) \le O(T^{2/3}).
\end{equation*}
\end{proposition}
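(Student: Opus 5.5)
The plan is to obtain the bandit guarantee as a black-box consequence of the semi-bandit result, as the surrounding text suggests. Let $\mathcal{A}$ be the semi-bandit algorithm of Proposition~\ref{prop:semi_bandit}, namely SFTT with SO-OGA as $\mathcal{A}^{\text{action}}$ and BQND as $\mathcal{A}^{\text{query}}$. Its static and adaptive $1/e$-regret are $O(T^{2/3})$ against any sequence of non-monotone DR-submodular functions over the down-closed set. I would wrap it with STB (Algorithm~\ref{alg:stb_meta}) and invoke Lemma~\ref{lem:stb_reduction} with exponent $\eta = 2/3$.

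\textbf{Step 1: smoothed surrogates.} First I would check that STB's one-point spherical estimator produces an unbiased gradient of the $\delta$-smoothed function $\hat f_t^\delta(\bx)=\E_{\bv}[f_t(\bx+\delta\bv)]$, evaluated on the shrunk set $\K_\delta$. Two facts are needed here:
\begin{itemize}[leftmargin=*,noitemsep,topsep=0pt]
\item \emph{Class preservation.} $\hat f_t^\delta$ is an average of translates of $f_t$, so it is again non-negative, DR-submodular, $M_1$-Lipschitz and $L$-smooth. Hence Theorem~\ref{thm:main} and Lemma~\ref{lem:bqnd} still apply, and $\mathcal{A}$'s guarantee holds against the sequence $\{\hat f_t^\delta\}$.
\item \emph{Down-closed shrinking.} Because $\K$ is down-closed and contains $\bzero$, we can take $\K_\delta$ to be a scaled-down copy of $\K$ shifted slightly inward from the origin. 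Then every point $\bx+\delta\bv$ is feasible, and the comparator is lost by at most $O(\delta)$ per round by Lipschitzness.
\end{itemize}

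\textbf{Step 2: error decomposition.} The total $1/e$-regret splits into three terms:
\begin{enumerate}[leftmargin=*,noitemsep,topsep=0pt]
\item the regret of $\mathcal{A}$ on the smoothed functions, using estimators whose norm is inflated to $O(d/\delta)$;
\item the smoothing bias $|\hat f^\delta_t - f_t| \le M_1\delta$, summed over the $T$ rounds;
\item the loss from restricting the comparator to $\K_\delta$, which is $O(\delta T)$.
\end{enumerate}
Lemma~\ref{lem:stb_reduction} already packages this decomposition. The adaptive version follows verbatim, since every term is either interval-local or additive over rounds.

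\textbf{Step 3: tuning (the main obstacle).} The main obstacle is Step 3, balancing the inflated-norm regret of $\mathcal{A}$ against $\delta T$. In SFTT the block size $L$ controls how many value samples are averaged per block. I would try to exploit this by re-tuning $L$ jointly with $\delta$, rather than keeping the semi-bandit choice $L=T^{1/3}$. The aim is to keep the variance-driven term of order $T^{2/3}$ while choosing $\delta$ just large enough that $\delta T = O(T^{2/3})$, i.e.\ $\delta = T^{-1/3}$.

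Whether averaging within blocks absorbs the extra $1/\delta$ factor is exactly the point to verify carefully, and it is the risk in this plan. If the naive exponent bookkeeping in Lemma~\ref{lem:stb_reduction} does not close at $2/3$, I would first look for a second-order (smoothness-based) bias bound $O(L\delta^2)$ in place of $O(M_1\delta)$ to relax the constraint on $\delta$, before concluding anything about the achievable rate.
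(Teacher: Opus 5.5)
\textbf{The gap is Step~3, and it cannot be closed the way you propose.}

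With BQND, SFTT has $K=1$. Each block spends one randomly placed round on the query and plays $\hat{\bx}_q$ in the other $L-1$ rounds, whose observations are discarded. So $L$ averages nothing; it only trades the exploration cost $M_1DK\,T/L$ of Lemma~\ref{lem:sftt_reduction} against $L\cdot\mathcal{R}^{\mathrm{OMBQ}}(T/L)=O(B_1\sqrt{TL})$. Under STB the feedback bound becomes $B_1=kB_0/\delta$, and smoothing plus shrinking adds $O(\delta M_1 T)$. The total is therefore $O\bigl(T/L+\sqrt{TL}/\delta+\delta T\bigr)$.

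With your $\delta=T^{-1/3}$, the middle term is at least $T^{1/2}\cdot T^{1/3}=T^{5/6}$ for every $L\ge 1$. Keeping $L=T^{1/3}$ from Proposition~\ref{prop:semi_bandit}, the best choice is $\delta=T^{-1/6}$, which gives $O(T^{5/6})$; this is the $(1+\eta)/2$ rule with $\eta=2/3$. Re-tuning $L$ and $\delta$ jointly gives $L=1/\delta=T^{1/5}$ and $O(T^{4/5})$. Even a modified SFTT that averages several BQND samples per block only balances at $T^{4/5}$: a budget of $m$ exploration rounds trades against $T/(\delta\sqrt{m})$ and $\delta T$.

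The smoothness fallback does not help either. It makes the smoothing bias and the cost of playing $\bx_t+\delta\bv_t$ second order. But the loss from moving the comparator into the shrunk set is genuinely first order; already for $f(\bx)=x_1$ on $[0,1]^d$ it is $\Theta(\delta)$ per round. The $1/\delta$ inflation of the estimator is also untouched.

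\textbf{How the paper gets $T^{2/3}$, and why you cannot replicate it.} The paper's argument is a one-line citation. It applies STB to the algorithm of Proposition~\ref{prop:semi_bandit} and then uses the final clause of Lemma~\ref{lem:stb_reduction}: with $\delta=T^{-1}$, the regret has the same order as the semi-bandit algorithm with $B_1$ replaced by $kM_1$. That carries $T^{2/3}$ over with no tradeoff at all. This is the ingredient your plan lacks, but it cannot be legitimately supplied. A $\delta$-independent feedback bound $kM_1$ is what a two-point difference estimator gives. STB's one-point feedback $\frac{k}{\delta}o_t\bv_t$ instead has norm up to $kB_0/\delta=kB_0T$ at $\delta=T^{-1}$, which makes the transferred bound vacuous. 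Bandit feedback allows only one evaluation of $f_t$ per round, so the two-point fix is unavailable.

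Your suspicion is therefore correct. With the paper's tools, this chain yields $O(T^{4/5})$ after re-tuning, which matches Table~\ref{tab:comparison_master}. The same rate also follows from SFTT applied to the algorithm of Proposition~\ref{prop:zo_full_info}, since $1/(2-3/4)=4/5$. Without re-tuning it yields $O(T^{5/6})$, consistent with the bandit rate in Proposition~\ref{prop:dynamic_limited}. The claimed $O(T^{2/3})$ is not established, and you should not expect to close Step~3 at exponent $2/3$.
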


Note that the reduction lemmas (Lemmas~\ref{lem:sftt_reduction}, \ref{lem:fotzo_reduction}, and \ref{lem:stb_reduction}) transfer the regret guarantees from first-order full-information setting to limited settings regardless of the choice of base learner. Consequently, by instantiating the meta-algorithms with the \textsc{Improved Ader} base learner (Proposition~\ref{prop:dynamic-regret}), we obtain dynamic regret bounds for all limited feedback settings. 
\begin{proposition}[Dynamic Regret for Limited Feedback]\label{prop:dynamic_limited}
The expected dynamic regret $\E[\mathcal{R}_{1/e}^{\text{dyn}}(T)]$ for the described algorithm in the corresponding proposition is bounded by
$\tilde{O}(T^{2/3}\sqrt{1+P_T})$ for Semi-Bandit feedback,
$\tilde{O}(T^{3/4}\sqrt{1+P_T})$ for Zeroth-Order Full-Info feedback,
and $\tilde{O}(T^{5/6}\sqrt{1+P_T})$ for Bandit feedback.
where $P_T$ is the path length of the optimal sequence.
\end{proposition}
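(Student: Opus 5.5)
The statement to be proved is Proposition~\ref{prop:dynamic_limited}: dynamic-regret bounds for the three limited-feedback settings. The plan is to reuse the limited-feedback reductions already applied to static and adaptive regret in Propositions~\ref{prop:semi_bandit}, \ref{prop:zo_full_info} and \ref{prop:bandit}. The only change is that the linear base learner is \textsc{Improved Ader} (Algorithm~\ref{alg:improved_ader}) instead of \textsc{SO-OGA}.

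\textbf{Step 1 (base guarantee).} By Theorem~\ref{thm:main}, Lemma~\ref{lem:bqnd} and Theorem~\ref{thm:regret_transfer}, Algorithm~\ref{alg:main} with IA as base learner has $1/e$-dynamic regret $\tilde O(\sqrt{T(1+P_T)})$ against any comparator sequence $\bu_1,\dots,\bu_T$. This is Proposition~\ref{prop:dynamic-regret}. I will write it in the form $\tilde O(T^{\eta}\sqrt{1+P_T})$ with $\eta=1/2$.

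\textbf{Step 2 (reductions).} I will apply Lemmas~\ref{lem:sftt_reduction}, \ref{lem:fotzo_reduction} and \ref{lem:stb_reduction} with this base algorithm. These lemmas work for any base learner, and I will check that they also hold for dynamic regret against a comparator sequence, not only a fixed $\by$.

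\emph{SFTT.} The rounds are split into $T/L$ blocks. The dynamic regret then decomposes into two parts. The first is $L$ times the base algorithm's regret on the block-averaged functions. This is taken against the block comparators $\bu_{(k)}$, whose path length is at most $P_T$. The second is an intra-block term: the cost of freezing the comparator within a block. This is controlled by Lipschitzness together with the variation of $\bu_t$ inside the block, which is at most $P_T$ in total.

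\emph{FOTZO.} The bias of the smoothing radius $\delta$ adds $O(\delta T)$, which does not depend on the comparator.

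\emph{STB.} This adds a $\delta$-bias term and an exploration-variance term. Neither depends on the comparator path beyond the shrinking of $\bu_t$ into $\K_\delta$. That shrinking does not increase path length, because it is a fixed affine contraction.

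\textbf{Step 3 (rates).} Substituting $\eta=1/2$ gives:
\begin{itemize}
\item semi-bandit: $\tilde O\bigl(L\sqrt{(T/L)(1+P_T)}+T/L\bigr)$, which is $\tilde O(T^{2/3}\sqrt{1+P_T})$ for $L=T^{1/3}$;
\item zeroth-order full information: $T^{3/4}$, from balancing the $d/\delta$ variance blow-up against $\delta T$;
\item bandit: $T^{5/6}$, from composing STB with the semi-bandit algorithm.
\end{itemize}
In each case the factor $\sqrt{1+P_T}$ multiplies the leading term and the remaining terms are absorbed. I will use the parameter choices already given in the static propositions and only track where $\sqrt{1+P_T}$ enters.

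\textbf{Main obstacle.} The hard step is the blocking argument in Step 2 for dynamic regret. Static comparators pass through blocks for free, but time-varying ones do not. My first approach is to pick, in each block, the comparator of the block's first round. I then bound the error for the other rounds by $M_1 L\cdot(\text{within-block path})$. This has to be checked against the paper's definition of $\mathcal{DR}$, where $\bu_t^*$ maximizes each $f_t$. If the extra term $M_1 L P_T$ turns out too large, I will fall back to stating the bound against comparators held constant on each block. Its path length is at most $P_T$, and that gives the stated rates.
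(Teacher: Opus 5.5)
This is the paper's route. The paper's only justification is the remark before the proposition: Lemmas~\ref{lem:sftt_reduction}, \ref{lem:fotzo_reduction} and \ref{lem:stb_reduction} apply to any base learner, so one plugs in \textsc{Improved Ader} (Proposition~\ref{prop:dynamic-regret}) and reads off the exponents $1/(2-\eta)$ and $(1+\eta)/2$, exactly as in your Step~3. You are more careful than the paper in one respect. Lemma~\ref{lem:fotzo_reduction}, and hence Lemma~\ref{lem:stb_reduction}, is stated for an arbitrary comparator set $\mathcal{U}\subseteq\K^T$. Lemma~\ref{lem:sftt_reduction}, however, is stated only for $\K_\star^T$ on intervals $[a,b]$, i.e.\ for static/adaptive comparators. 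So the semi-bandit case, and through it the bandit case, does need a blocking argument like yours.

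The step you left open does close, and you should not use the fallback: it would replace the per-round comparators $\bu_t^*$ in the statement by block-constant ones. Your block-start comparator costs at most $\frac1e M_1 L P_T$. On the other hand, $f_t\ge 0$ and all played points lie in $\K$, so each round contributes $\frac1e f_t(\bu_t^*)-f_t(\hbx_t)\le\frac1e\bigl(f_t(\bu_t^*)-f_t(\hbx_t)\bigr)\le\frac1e M_1 D$. The dynamic regret is therefore trivially $O(M_1 D T)$. With $L=T^{1/3}$ there are two cases:
\begin{itemize}
\item if $P_T\le T^{2/3}$, then $T^{1/3}P_T\le T^{2/3}\sqrt{P_T}$;
\item if $P_T>T^{2/3}$, the target $T^{2/3}\sqrt{1+P_T}$ already exceeds $T$, so the trivial bound suffices.
\end{itemize}
For the bandit composition (STB on the $L=T^{1/3}$ semi-bandit algorithm with $\delta=T^{-1/6}$), no case split is needed. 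Since $P_T\le DT$, we get $T^{1/3}P_T\le\sqrt{D}\,T^{5/6}\sqrt{P_T}$, which is absorbed into the $T^{5/6}\sqrt{1+P_T}$ rate.
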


\section{Conclusion and Future Work}
\textbf{Conclusions.} In this work, we resolved a critical theoretical bottleneck in online non-monotone DR-submodular maximization over down-closed convex sets by proving that the function class is strictly $1/e$-linearizable. By carefully designing an exponential reparameterization and a surrogate potential that yields an exact cancellation during integration-by-parts, we successfully decoupled the $1/e$ approximation limit from the computationally heavy Frank-Wolfe mechanics. Coupled with our BQND gradient estimator, this structural breakthrough enabled a clean reduction to standard Online Linear Optimization utilizing only a strict single-query budget per round. As a result, we were able to elevate the state of the art across multiple feedback models, providing $\mathcal{O}(T^{1/2})$ static regret alongside the first adaptive and dynamic regret guarantees for this  domain.

\textbf{Limitations and Future Work.} While our linearizable reduction achieves the best-known online approximation ratio $1/e$ with $O(T^{1/2})$ regret, a theoretical gap remains compared to the best achievable offline approximation of $0.401$ \citep{buchbinder2024constrained}. A fundamental limitation in the online setting is the adversarial revelation of the non-monotone penalty, which currently prevents algorithms from making the globally aware corrective steps utilized by state-of-the-art offline methods. A direction for future work is determining whether this $1/e$ online barrier can be broken, perhaps by exploring predictive feedback models or relaxed constraint geometries. Additionally, because the optimal online approximation ratio remains an open question, establishing strict, end-to-end regret lower bounds for the limited-feedback regimes explored in this paper represents another crucial open challenge. Finally, extending the results to $\gamma$-weakly DR-submodular functions, as studied in \cite{pedramfar2024unifiedonline,jadav2026stronger}, remains an open direction.

\section*{Impact Statement}
This paper presents work whose goal is to advance the field of Machine
Learning. There are many potential societal consequences of our work, none
which we feel must be specifically highlighted here.

\bibliography{main}
\bibliographystyle{icml2026}

\newpage
\appendix
\onecolumn

\section{Useful Lemmas}\label{sec:lemma}

For non-negative, continuous, non-monotone DR-submodular functions over down-closed convex sets, \cite{buchbinder2024constrained} established several inequalities that we use in our analysis.

\begin{lemma}[Lemma 2.1, \citet{buchbinder2024constrained}]
\label{lem:niv_sub}
Let $f$ be a non-negative continuous DR-submodular function over $[0,1]^d$. Then, $\forall \bx \in [0,1]^d$ and $\by \ge \bzero$ such that $\bx+\by\le \bone$, we have:
\[
\langle \nabla f(\bx),\by \rangle\ge f(\bx+\by)-f(\bx).
\]
\end{lemma}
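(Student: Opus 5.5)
The plan is to reduce the inequality to a one-dimensional statement along the segment from $\bx$ to $\bx+\by$, and then use the coordinate-wise antitonicity of $\nabla f$, which is exactly the DR-submodularity property as defined in Section~\ref{sec:preliminaries}. Define $g:[0,1]\to\R$ by $g(t)\triangleq f(\bx+t\by)$.

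\textbf{Feasibility of the segment.} For $t\in[0,1]$ we have $\bzero\le \bx\le \bx+t\by\le \bx+\by\le\bone$. This uses $\by\ge\bzero$ and the hypothesis $\bx+\by\le\bone$. So the whole segment lies in $[0,1]^d$, where $f$ is differentiable and DR-submodular.

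\textbf{Derivative along the segment.} By the chain rule, $g$ is differentiable with
\begin{equation*}
g'(t)=\langle \nabla f(\bx+t\by),\,\by\rangle .
\end{equation*}

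\textbf{Comparison via DR-submodularity.} Since $\bx\le \bx+t\by$ coordinate-wise, DR-submodularity gives $\nabla f(\bx+t\by)\le \nabla f(\bx)$ coordinate-wise. Because every entry of $\by$ is non-negative, multiplying entrywise by $\by$ and summing preserves the inequality. Hence $g'(t)\le \langle \nabla f(\bx),\by\rangle$ for all $t\in[0,1]$.

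\textbf{Conclusion.} To avoid assuming continuity of $\nabla f$, I would apply the mean value theorem rather than the fundamental theorem of calculus. It gives some $\xi\in(0,1)$ with
\begin{equation*}
f(\bx+\by)-f(\bx)=g(1)-g(0)=g'(\xi)\le \langle \nabla f(\bx),\by\rangle,
\end{equation*}
which is the claim. Non-negativity of $f$ is not needed.

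The only step I expect to need care is the comparison step. The paper phrases DR-submodularity for pairs $\bx\le\by$ in $\K$, whereas here it must be invoked on $[0,1]^d$, as in the lemma's hypothesis. It also matters that the sign condition $\by\ge\bzero$ is what allows a coordinate-wise gradient inequality to become an inner-product inequality; with negative entries in $\by$ the step would fail.
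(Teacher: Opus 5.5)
Your proof is correct and complete. The paper gives no proof of this statement; it imports it verbatim from Buchbinder and Feldman, so there is no in-paper argument to compare against. What you supply is the standard self-contained argument: $t\mapsto f(\bx+t\by)$ has derivative $\langle \nabla f(\bx+t\by),\by\rangle$, which is non-increasing in $t$ because $\by\ge\bzero$ and $\nabla f$ is antitone, and that gives the bound.

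Closing with the mean value theorem instead of $f(\bx+\by)-f(\bx)=\int_0^1\langle\nabla f(\bx+t\by),\by\rangle\,dt$ is a small improvement. It needs only differentiability along the segment and no continuity of $\nabla f$. That said, the monotonicity of $g'$, or the paper's Lipschitz assumption, would also make the integral version valid.

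Your remark about the domain is well placed. In step (c) of the proof of Theorem~\ref{thm:main}, the lemma is applied at the base point $h_z(\bx)$ with increment $\by\odot e^{-z\bx}$, and $h_z(\bx)\oplus\by$ need not lie in $\K$. So DR-submodularity on all of $[0,1]^d$, which the theorem assumes, is exactly what is required there, and your proof uses no more than that.
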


\begin{lemma}[Lemma 4.1, \citet{buchbinder2024constrained}]
    Let $f: [0, 1]^d \to \mathbb{R}_{\ge 0}$ be a non-negative DR-submodular function. Given $t \ge 0$, an integrable function $\mathbf{x}: [0, t] \to [0, 1]^d$, and a vector $\mathbf{a} \in [0, 1]^d$, the original lemma states:
\begin{equation}\label{eq:122}
     f\left(\mathbf{1} - \mathbf{a} \odot e^{-\int_0^t \mathbf{x}(\tau)d\tau}\right) \ge e^{-t} \cdot \left[ f(\mathbf{1} - \mathbf{a}) + \sum_{i=1}^{\infty} \frac{1}{i!} \cdot \int_{\tau \in [0,t]^i} f\left((\mathbf{1} - \mathbf{a}) \oplus \bigoplus_{j=1}^i \mathbf{x}(\tau_j)\right) d\tau \right]
\end{equation}
   
\end{lemma}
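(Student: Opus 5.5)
The plan is to read the right-hand side of \eqref{eq:122} as an expectation over a Poisson process, and then prove the inequality by discretising time and using only concavity of $f$ along nonnegative directions.

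\textbf{Poisson form of both sides.} Let $N\sim\mathrm{Poisson}(t)$ and, given $N=i$, let $\tau_1,\dots,\tau_i$ be i.i.d.\ uniform on $[0,t]$. Since $\B{P}(N=i)=e^{-t}t^i/i!$ and the joint density of the $\tau_j$ is $t^{-i}$, the bracket multiplied by $e^{-t}$ equals $\E\big[f(\BF{W})\big]$, where
\[
\BF{W}=(\bone-\BF{a})\oplus\bigoplus_{j=1}^N \bx(\tau_j).
\]
On the left-hand side, $\bone-\BF{a}\odot e^{-\int_0^t\bx}$ equals $(\bone-\BF{a})\oplus(\bone-e^{-\int_0^t \bx})$. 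So the claim becomes
\[
f\big((\bone-\BF{a})\oplus(\bone-e^{-\int_0^t\bx})\big)\ \ge\ \E f(\BF{W}).
\]
This cannot follow from Jensen, because $f$ is only concave along nonnegative directions. I would therefore prove it through a discrete analogue and pass to the limit.

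\textbf{Discrete inequality.} The basic step is that for $\by,\bv\in[0,1]^d$ and $p\in[0,1]$,
\[
f(\by\oplus p\bv)\ \ge\ (1-p)f(\by)+p\,f(\by\oplus\bv).
\]
This holds because $\by\oplus p\bv=\by+p\,\bv\odot(\bone-\by)$ moves along the nonnegative direction $\bv\odot(\bone-\by)$, and DR-submodularity makes $f$ concave along such directions. Next, split $[0,t]$ into $n$ intervals of length $\delta=t/n$ and pick representative vectors $\bv_k$, for instance interval averages of $\bx$. Apply the basic step to one factor at a time. The map $\by\mapsto f(\by\oplus\BF{w})$ is again nonnegative and DR-submodular, and $\oplus$ is associative and commutative, so induction on $n$ gives
\[
f\Big((\bone-\BF{a})\oplus\bigoplus_{k=1}^n \delta\bv_k\Big)\ \ge\ \E\Big[f\Big((\bone-\BF{a})\oplus\bigoplus_{k\in S}\bv_k\Big)\Big],
\]
where $S$ contains each $k$ independently with probability $\delta$. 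This needs $\delta\le 1$, which holds for large $n$.

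\textbf{Passage to the limit.} First assume $\bx$ is continuous. On the left, coordinate $\ell$ equals $1-a_\ell\prod_k\big(1-\delta [\bv_k]_\ell\big)$, which converges to $1-a_\ell e^{-\int_0^t x_\ell}$; continuity of $f$ then handles the left side. On the right, the Bernoulli process converges to the rate-one Poisson process on $[0,t]$. Concretely, $\B{P}(|S|=i)\to e^{-t}t^i/i!$, and given $|S|=i$ the selected intervals are uniform, so their representatives converge in law to i.i.d.\ uniform times. Since $f$ is bounded and continuous, and $\bx$ is uniformly continuous, the expectations converge to $\E f(\BF{W})$. For a general integrable $\bx$, approximate it in $L^1$ by continuous functions with values in $[0,1]^d$, and show both sides are $L^1$-continuous. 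For the left side, $\bone-e^{-\int\bx}$ is $1$-Lipschitz in $\int\bx$ and $f$ is $M_1$-Lipschitz. For the right side, $\oplus$ is coordinatewise $1$-Lipschitz in each argument, so the $i$-th term changes by at most $M_1\, i\, t^{i-1}\|\bx-\bx'\|_1/i!$, and $e^{-t}\sum_i$ of these is $M_1\|\bx-\bx'\|_1$.

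\textbf{Main obstacle.} The delicate step is this limiting argument: the Bernoulli-to-Poisson convergence jointly with the $L^1$ approximation for merely integrable $\bx$. I would handle it exactly as outlined, doing the continuous case first and then the Lipschitz-continuity estimate in $L^1$.
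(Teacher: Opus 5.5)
The paper does not prove this statement. It is quoted from Buchbinder--Feldman in Appendix~\ref{sec:lemma} without proof, and is used only after discarding the nonnegative series to obtain Lemma~\ref{lem:niv}. There is therefore no in-paper argument to match, but judged on its own your proof is correct:
\begin{itemize}
\item The Poisson rewriting is exact, since the weights $e^{-t}t^i/i!$ sum to one.
\item The one-step inequality is exactly concavity along the nonnegative direction $\bv\odot(\bone-\by)$.
\item The induction goes through because the inductive claim holds for an arbitrary base vector (equivalently, $\by\mapsto f(\by\oplus\BF{w})$ is DR-submodular).
\item The Bernoulli-to-Poisson limit is standard for continuous $\bx$.
\item The $L^1$ estimate does sum to $e^{-t}\sum_{i\ge1}M_1t^{i-1}\|\bx-\bx'\|_1/(i-1)!=M_1\|\bx-\bx'\|_1$.
\end{itemize}
Two small remarks. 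First, the Lipschitz bound is not a hypothesis of the lemma, and Assumption~\ref{ass:function} only controls $\nabla f$ on $\K$. For differentiable DR-submodular $f$ on $[0,1]^d$, however, it is automatic from $\nabla f(\bone)\le\nabla f(\bx)\le\nabla f(\bzero)$. Alternatively, continuity plus dominated convergence along an a.e.-convergent subsequence would suffice. Second, with interval averages as representatives the left side converges for every integrable $\bx$, so only the right side needs the continuous-case detour.

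A more direct route avoids discretisation entirely and stays closer to the tools the paper actually uses. Set $\by(s)=\bone-\BF{a}\odot e^{-\int_0^s\bx(\tau)\,d\tau}$, so that $\by'(s)=\bx(s)\odot(\bone-\by(s))$. Lemma~\ref{lem:niv_sub} then gives, for a.e.\ $s$, $\frac{d}{ds}\big(e^{s}f(\by(s))\big)\ge e^{s}f(\by(s)\oplus\bx(s))$. Integrating yields $f(\by(t))\ge e^{-t}f(\bone-\BF{a})+\int_0^t e^{s-t}f(\by(s)\oplus\bx(s))\,ds$. Since $\by(s)\oplus\bx(s)=\bone-\big(\BF{a}\odot(\bone-\bx(s))\big)\odot e^{-\int_0^s\bx(\tau)\,d\tau}$ is again of the lemma's form with horizon $s$, you can induct on the number of series terms. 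Nonnegativity handles the base case, or alternatively the remainder is at most $\|f\|_\infty t^k/k!$. Symmetrising the ordered-time integrals into $\frac{1}{i!}\int_{[0,t]^i}$ then produces the series.

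That argument is shorter and needs no probabilistic limit, but it uses differentiability of $f$ along an absolutely continuous path. Yours uses only concavity along nonnegative directions and continuity, so it also covers non-differentiable DR-submodular functions. It also makes clear that nonnegativity plays no role in the full-series inequality; it matters only for the truncation in Lemma~\ref{lem:niv}.
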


\section{Proof of Lemma~\ref{lem:niv}}
\label{app:lem_niv_proof}

\begin{proof}
\noindent
In \eqref{eq:122}, for online adversarial setting, we assume $\mathbf{x}(\tau) = \mathbf{x}$ is constant over the interval $[0, t]$, so on the left hand side, $\int_0^t \mathbf{x}(\tau)d\tau = t\mathbf{x}$, and on the right hand side, $\bigoplus_{j=1}^i \mathbf{x}(\tau_j)=1-\odot_{j=1}^i (1-\bx) = 1 - (1-\bx)^i$, and because the function $F$ takes values in $\mathbb{R}_{\ge 0}$ (it is non-negative), every term inside the integral is non-negative. Consequently, the entire infinite sum is non-negative:
\begin{align*}
    \sum_{i=1}^{\infty} \frac{1}{i!} \cdot \int_{\tau \in [0,t]^i} f\left((\mathbf{1} - \mathbf{a}) \oplus \bigoplus_{j=1}^i \mathbf{x}(\tau_j)\right) d\tau 
    &=\sum_{i=1}^{\infty} \frac{1}{i!} \cdot \int_{\tau \in [0,t]^i} f\left((\mathbf{1} - \mathbf{a}) \oplus (1 - (1-\bx)^i)\right) d\tau \\
    &=\sum_{i=1}^{\infty} \frac{1}{i!} \cdot \int_{\tau \in [0,t]^i} f\left(\mathbf{1} - \mathbf{a} \odot (1-\bx)^i\right) d\tau \\
    &=\sum_{i=1}^{\infty} \frac{t^i}{i!} \cdot f\left(\mathbf{1} - \mathbf{a} \odot (1-\bx)^i\right) \ge 0
\end{align*}

\noindent 
By dropping the non-negative sum term from the RHS, we obtain a strictly weaker, but simpler lower bound:
\begin{equation*}
    f\left(\mathbf{1} - \mathbf{a} \odot e^{-z\bx}\right) \ge e^{-t} \cdot f(\mathbf{1} - \mathbf{a})
\end{equation*}

\noindent 
Let $\mathbf{y} = \mathbf{1} - \mathbf{a} \in [0, 1]^d$. 
Let the time parameter $t$ be $z$.
Then,
\begin{equation*}
    f\left(\mathbf{1} - (\mathbf{1} - \mathbf{y}) \odot e^{-z\mathbf{x}}\right) \ge e^{-z} f(\mathbf{y})
\end{equation*}

\noindent
Let $h_z(\mathbf{x}) = \mathbf{1} - e^{-z\mathbf{x}}$. By definition of probabilist sum $\oplus$, $h_z(\mathbf{x}) \oplus \mathbf{y} := \mathbf{1} - (\mathbf{1} - \mathbf{y}) \odot e^{-z\mathbf{x}}$. Thus,
\begin{equation*}
    f\left( h_z(\mathbf{x}) \oplus \mathbf{y} \right) \ge e^{-z} f(\mathbf{y})
\end{equation*}
Define $\bar{x} \triangleq \max_j [\bx]_j$ where $[\bx]_j$ is the $j$-th component of the vector $\bx$. Thus, $\bar{x} \in [0,1]$. Thus, we have
$$
f(h_z(\bx) \oplus \by) \ge e^{-z\bar{x}} f(\by) \geq e^{-z} f(\by)
$$
\end{proof}

\section{Infeasible Projection and Online Gradient Ascent via a Separation Oracle (SO-OGA)}
\label{app:so-oga}

To bypass the computational bottleneck of Euclidean projections in high-dimensional spaces (e.g., $\mathcal{O}(n^3)$ SVD for trace-norm balls), Frank-Wolfe type projection-free methods using Linear Optimization Oracles (LOO) have become standard, following the foundational work of \citet{hazan2012projection} and \citet{jaggi2013revisiting}. However, standard LOO-based methods often face suboptimal convergence rates in adversarial online settings. 
Addressing this, \citet{garber2022new} introduced Separation Oracle (SO) based methods, which can achieve $\mathcal{O}(T^{1/2})$ regret. To efficiently solve our surrogate linear optimization problem, we utilize their \textsc{SO-OGA} algorithm (adapted by \citet{pedramfar2024linear}).
\begin{remark}[Oracles for Constraint Set - LOO and SO]
To circumvent the high cost of Euclidean projections, we rely on two natural projection-free oracles.
Given a convex set $\mathcal{K}$ and a query point $\by$, the Linear Optimization Oracle (LOO) returns $\arg\min_{\mathbf{x} \in \mathcal{K}} \langle \mathbf{y}, \mathbf{x} \rangle$, 
while the Separation Oracle (SO) either asserts $\mathbf{y} \in \mathcal{K}$ or returns a separating hyperplane $\mathbf{g}$ such that $\forall \mathbf{x} \in \mathcal{K}$, $\langle \mathbf{g}, \mathbf{y} - \mathbf{x} \rangle > 0$. 
While LOO is more prevalent, these oracles are \emph{complementary}. As noted by \citet{garber2022new}, for the nuclear norm ball, the LOO is efficient while the SO is expensive. Conversely, for the \textit{spectral norm ball} $\mathcal{B}_2$, the situation is reversed. Thus, the SO enables efficient online learning over domains where the LOO is computationally intractable.
\end{remark}
We utilize the \textsc{SO-OGA} instantiation and its subroutine \textsc{SO-IP} from \citet{pedramfar2024linear}. We adopt the notation from the original paper: let $\mathbf{c} \in \text{int}(\mathcal{K})$ be a center point, $r > 0$ be the radius of a ball contained in $\mathcal{K}$ centered at $\mathbf{c}$, and define the shrunk set $\hat{K}_\delta \triangleq \{ (1-\delta/r)(\mathbf{x} - \mathbf{c}) + \mathbf{c} \mid \mathbf{x} \in \mathcal{K} \}$.

\begin{algorithm}[H]
\caption{Online Gradient Ascent via Separation Oracle - SO-OGA (Algorithm 8 in \citet{pedramfar2024linear})}
\label{alg:so_oga_base}
\begin{algorithmic}[1]
\STATE \textbf{Input:} horizon $T$, constraint set $\mathcal{K}$, step size $\eta$.
\STATE \textbf{Initialize:} $\mathbf{x}_1 \leftarrow \mathbf{c} \in \hat{K}_\delta$.
\FOR{$t = 1, 2, \dots, T$}
    \STATE Play $\mathbf{x}_t$ and observe $\mathbf{o}_t = \nabla f_t(\mathbf{x}_t)$.
    \STATE $\mathbf{x}'_{t+1} = \mathbf{x}_t + \eta \mathbf{o}_t$ \COMMENT{Gradient Ascent Step}
    \STATE Set $\mathbf{x}_{t+1} = \textsc{SO-IP}_{\mathcal{K}}(\mathbf{x}'_{t+1})$. \COMMENT{Output of Algorithm \ref{alg:so_ip}}
\ENDFOR
\end{algorithmic}
\end{algorithm}

\begin{algorithm}[H]
\caption{Infeasible Projection via Separation Oracle - SO-IP$_{\mathcal{K}}(\mathbf{y}_0)$ (Algorithm 9 in \citet{pedramfar2024linear})}
\label{alg:so_ip}
\begin{algorithmic}[1]
\STATE \textbf{Input:} Constraint set $\mathcal{K}$, shrinking parameter $\delta < r$, initial point $\mathbf{y}_0$.
\STATE $\mathbf{y}_1 \leftarrow \mathbf{P}_{\text{aff}(\mathcal{K})}(\mathbf{y}_0)$
\STATE $\mathbf{y}_2 \leftarrow \mathbf{c} + \frac{\mathbf{y}_1 - \mathbf{c}}{\max\{1, \|\mathbf{y}_1\|/D\}}$ \COMMENT{Projection of $\mathbf{y}_0$ over $\mathbb{B}_D^d(\mathbf{c}) \cap \text{aff}(\mathcal{K})$}
\FOR{$i = 1, 2, \dots$}
    \STATE Call Separation Oracle $\text{SO}_{\mathcal{K}}$ with input $\mathbf{y}_i$.
    \IF{$\mathbf{y}_i \notin \mathcal{K}$}
        \STATE Set $\mathbf{g}_i$ to be the hyperplane returned by $\text{SO}_{\mathcal{K}}$ (i.e., $\forall \mathbf{x} \in \mathcal{K}, \langle \mathbf{y}_i - \mathbf{x}, \mathbf{g}_i \rangle > 0$).
        \STATE $\mathbf{g}'_i \leftarrow \mathbf{P}_{\text{aff}(\mathcal{K}) - \mathbf{c}}(\mathbf{g}_i)$
        \STATE Update $\mathbf{y}_{i+1} \leftarrow \mathbf{y}_i - \delta \frac{\mathbf{g}'_i}{\|\mathbf{g}'_i\|}$.
    \ELSE
        \STATE \textbf{Return} $\mathbf{y} \leftarrow \mathbf{y}_i$.
    \ENDIF
\ENDFOR
\end{algorithmic}
\end{algorithm}

\begin{theorem}[Adaptive Regret of SO-OGA, Theorem~9 in \citet{pedramfar2024linear}]
\label{thm:so_oga_regret}
Let $\mathbf{L}$ be a class of linear functions over $\mathcal{K}$ such that $\|l\| \le M_1$ for all $l \in \mathbf{L}$ and let $D = \text{diam}(\mathcal{K})$. Fix $v > 0$ such that $\delta = v T^{-1/2} \in (0, 1)$ and set the step size $\eta = \frac{vr}{2M_1} T^{-1/2}$. Then we have:
\begin{equation}
    \mathcal{AR}_{1, \text{Adv}_1^f(\mathbf{L})}^{\text{SO-OGA}} = O(M_1 T^{1/2}).
\end{equation}
\end{theorem}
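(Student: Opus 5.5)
This statement is quoted from \citet{pedramfar2024linear}. To re-derive it, I would follow the standard online gradient descent analysis and replace Euclidean projection with the \emph{infeasible projection} property of \textsc{SO-IP}. The plan has three steps: (i) show that Algorithm~\ref{alg:so_ip} terminates, returns a point of $\mathcal{K}$, and is non-expansive toward every point of the shrunk set $\hat K_\delta$; (ii) run the usual telescoping argument on an arbitrary interval $[s,e]$ against a comparator in $\hat K_\delta$; (iii) pay a small additive price to move from $\hat K_\delta$ back to an arbitrary comparator $\bu \in \mathcal{K}$.

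For step (i), first note that the projection onto $\mathbb{B}_D^d(\mathbf{c})\cap\text{aff}(\mathcal{K})$ is a Euclidean projection onto a convex set containing $\mathcal{K}$. It therefore does not increase the distance to any $\bx\in\hat K_\delta$. Next, fix $\bx \in \hat K_\delta$. Because $\bx=(1-\delta/r)(\bx_0-\mathbf{c})+\mathbf{c}$ for some $\bx_0\in\mathcal{K}$ and $\mathbb{B}_r(\mathbf{c})\cap\text{aff}(\mathcal{K})\subseteq\mathcal{K}$, convexity gives that the ball of radius $\delta$ around $\bx$, intersected with $\text{aff}(\mathcal{K})$, lies in $\mathcal{K}$. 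Hence whenever $\by_i\notin\mathcal{K}$, the normalized projected hyperplane satisfies $\langle \by_i-\bx, \mathbf{g}'_i/\|\mathbf{g}'_i\|\rangle \ge \delta$. Expanding the square then gives
\[
\|\by_{i+1}-\bx\|^2 \le \|\by_i-\bx\|^2 - 2\delta^2+\delta^2 = \|\by_i-\bx\|^2-\delta^2 .
\]
This single inequality yields both termination, after at most $O(D^2/\delta^2)$ oracle calls, and the key property $\|\bx_{t+1}-\bx\|\le\|\bx'_{t+1}-\bx\|$ for all $\bx\in\hat K_\delta$. I expect this step to be the main obstacle. The delicate parts are the geometric claim that $\delta$-neighbourhoods of $\hat K_\delta$ sit inside $\mathcal{K}$ when restricted to the affine hull, and the verification that projecting $\mathbf{g}_i$ onto $\text{aff}(\mathcal{K})-\mathbf{c}$ keeps the margin.

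For step (ii), take $\bu'\in\hat K_\delta$ and $t\in[s,e]$. The infeasible-projection property gives
\[
\|\bx_{t+1}-\bu'\|^2\le\|\bx_t+\eta \oo_t-\bu'\|^2 = \|\bx_t-\bu'\|^2+2\eta\langle \oo_t,\bx_t-\bu'\rangle+\eta^2\|\oo_t\|^2 .
\]
Rearranging and summing over $[s,e]$ telescopes to
\[
\sum_{t=s}^e\langle \oo_t,\bu'-\bx_t\rangle \le \frac{\|\bx_s-\bu'\|^2}{2\eta}+\frac{\eta M_1^2 (e-s+1)}{2} \le \frac{D^2}{2\eta}+\frac{\eta M_1^2 T}{2}.
\]
The bound $\|\bx_s-\bu'\|\le D$ (up to a constant) holds because every iterate lies in $\mathcal{K}$, so the bound is uniform over all intervals. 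This uniformity is what makes the guarantee an \emph{adaptive} one rather than only a static one.

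For step (iii), given any $\bu\in\mathcal{K}$, set $\bu'=(1-\delta/r)(\bu-\mathbf{c})+\mathbf{c}\in\hat K_\delta$. Then $\|\bu-\bu'\|\le\delta D/r$, so switching comparators costs at most $M_1\delta D T/r$ over the interval. Substituting $\delta=vT^{-1/2}$ and $\eta=\frac{vr}{2M_1}T^{-1/2}$, each of the three terms is $O(M_1T^{1/2})$, with constants depending on $D$, $r$ and $v$. Taking the supremum over $[s,e]$ and $\bu$ gives $\mathcal{AR}=O(M_1T^{1/2})$.
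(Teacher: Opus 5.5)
Your proposal is correct and is essentially the argument behind the cited result. The paper gives no proof of its own: it imports Theorem~9 of \citet{pedramfar2024linear}, which rests on the Garber--Kretzu infeasible-projection analysis. Your three steps match that argument: each separation-oracle step shrinks the squared distance to every point of $\hat K_\delta$ by $\delta^2$, the constant-step telescoping bound is uniform over intervals because all iterates lie in $\mathcal{K}$, and moving the comparator into $\hat K_\delta$ costs $M_1\delta D T/r$.

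The one hypothesis you use tacitly is $\delta \le r$, the input condition of SO-IP. You need it both for $\hat K_\delta\subseteq\mathcal{K}$ and for the claim that the $\delta$-ball around each point of $\hat K_\delta$ lies in $\mathcal{K}$. The statement's condition $\delta\in(0,1)$ should be read as this.
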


\section{Meta-Algorithms and Regret Bounds Towards Diverse Feedback Types}

\subsection{The Generic Meta-Algorithm (OMBQ)}
\label{app:ombq}

We rely on the generic reduction framework established in \citet{pedramfar2024linear}. The following meta-algorithm, OMBQ, transforms a linear learner $\mathcal{A}$ into a non-monotone DR-submodular maximizer using a mapping $h$ and a query oracle $\mathcal{G}$.

\begin{algorithm}[h]
\caption{Online Maximization By Quadratization - OMBQ($\mathcal{A}, \mathcal{G}, h$)}
\label{alg:ombq}
\begin{algorithmic}[1]
\STATE \textbf{Input:} Base algorithm $\mathcal{A}$, Query algorithm $\mathcal{G}$, Mapping $h: \mathcal{K} \to \mathcal{K}$.
\FOR{$t = 1, 2, \dots, T$}
    \STATE Let $\mathbf{x}_t$ be the action chosen by $\mathcal{A}$.
    \STATE \textbf{Play:} $\mathbf{y}_t = h(\mathbf{x}_t)$.
    \STATE \textbf{Query:} Call oracle $\mathcal{G}$ at $\mathbf{x}_t$ to obtain gradient estimate $\mathbf{g}_t$.
    \STATE \textbf{Update:} Pass loss vector $-\mathbf{g}_t$ to $\mathcal{A}$ to update its state.
\ENDFOR
\end{algorithmic}
\end{algorithm}

\begin{theorem}[Regret Transfer, Theorem 1 in \citet{pedramfar2024linear}]
\label{thm:regret_transfer}
Let $\mathcal{A}$ be an algorithm for online optimization with semi-bandit feedback. Also let $\mathcal{F}$ be a function class over $\mathcal{K}$ that is linearizable and surrogate potential $\mathfrak{g}: \mathcal{F} \times \mathcal{K} \to \mathbb{R}^d$ and $h: \mathcal{K} \to \mathcal{K}$. Let $\mathcal{G}$ be a query algorithm for $\mathfrak{g}$ and let $\mathcal{A}' = \text{OMBQ}(\mathcal{A}, \mathcal{G}, h)$.

If $\mathcal{G}$ returns an unbiased estimate of $\mathfrak{g}$ and the output of $\mathcal{G}$ is bounded by $B_1$, then we have:
\begin{equation}
    \mathcal{R}^{\mathcal{A}'}_{\alpha, \text{Adv}_1^o(\mathbf{F}, B_1)} \le \beta \mathcal{R}^{\mathcal{A}}_{1, \text{Adv}_1^f(\mathbf{Q}_\mu[B_1])}
\end{equation}
where $\beta$ is a scaling constant.
\end{theorem}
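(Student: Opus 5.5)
The statement is the Regret Transfer theorem from \citet{pedramfar2024linear}. I would prove it by a direct reduction: pass the per-round linearization inequality to expectations, sum over rounds, and recognize the right-hand side as the base learner's regret against linear losses.

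\textbf{Step 1 (pointwise inequality).} Fix a round $t$. Let $\bx_t$ be the action of $\mathcal{A}$ in OMBQ; it depends only on the past. OMBQ plays $h(\bx_t)$. By linearizability, e.g.\ Theorem~\ref{thm:main} with $\beta = 1-e^{-1}$, for every comparator $\by\in\K$,
\[
\alpha f_t(\by) - f_t(h(\bx_t)) \le \beta\langle \gfrak(f_t,\bx_t), \by-\bx_t\rangle .
\]

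\textbf{Step 2 (replace $\gfrak$ by the estimate).} Let $\mathcal{H}_{t}$ be the history up to round $t$, containing $\bx_t$ and $f_t$ (the adversary is oblivious or adapted to the history, as in $\text{Adv}_1^o$). The query algorithm $\mathcal{G}$ returns $\mathbf{g}_t$ with $\E[\mathbf{g}_t\mid \mathcal{H}_t] = \gfrak(f_t,\bx_t)$ and $\|\mathbf{g}_t\|\le B_1$; for BQND this is Lemma~\ref{lem:bqnd}. Therefore
\[
\E\langle \gfrak(f_t,\bx_t),\by-\bx_t\rangle = \E\langle \mathbf{g}_t,\by-\bx_t\rangle .
\]
This holds for any fixed $\by$, or for $\by$ measurable with respect to the adversary's choices before the round's randomness.

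\textbf{Step 3 (sum and identify regret).} Summing over $t$ gives
\[
\alpha\sum_t f_t(\by) - \E\sum_t f_t(h(\bx_t)) \le \beta\,\E\sum_t \langle \mathbf{g}_t,\by-\bx_t\rangle .
\]
The sum on the right is exactly the regret of $\mathcal{A}$ on the linear rewards $l_t(\bx)=\langle \mathbf{g}_t,\bx\rangle$, which it receives as losses $-\mathbf{g}_t$. These rewards have norm at most $B_1$, so they lie in $\mathbf{Q}_\mu[B_1]$, and $\mathcal{A}$ observes exactly its gradient $\mathbf{g}_t$ at its own action. That is semi-bandit feedback, which for linear functions equals full information. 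Taking the supremum over $\by$ and over admissible adversaries yields $\mathcal{R}^{\mathcal{A}'}_{\alpha}\le\beta\,\mathcal{R}^{\mathcal{A}}_{1}$. Adaptive and dynamic versions follow the same way by restricting the sum to intervals or using per-round comparators $\by_t$.

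\textbf{Main obstacle.} The delicate step is Step 2 combined with the supremum over $\by$. The maximizer of $\sum_t f_t$ must not depend on the estimator's internal randomness, since $\E\max\ne\max\E$. I would handle this by defining the regret against the expected linear sequence and noting that $\mathcal{A}$'s regret bound holds for every realized sequence $(\mathbf{g}_t)$ in $\mathbf{Q}_\mu[B_1]$. Because the comparator is chosen from the functions $f_t$ and not from the $\mathbf{g}_t$, the conditional-expectation step is valid for that fixed $\by$. This yields the bound with the stated constant $\beta$.
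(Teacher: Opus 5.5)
Your argument is correct and is essentially the proof of Theorem~1 in \citet{pedramfar2024linear}, which the paper imports without reproving. That proof applies linearizability at $\bx_t$ for a deterministic comparator $\by$, swaps $\gfrak(f_t,\bx_t)$ for the unbiased estimate $\mathbf{g}_t$ by conditioning on the history, and bounds $\E\sum_t\langle \mathbf{g}_t,\by-\bx_t\rangle \le \E\max_{\by'\in\K}\sum_t\langle \mathbf{g}_t,\by'-\bx_t\rangle$ by the base learner's regret against the induced linear adversary. Two points need tightening: it is the obliviousness of the original adversary that makes the maximizing comparator deterministic, so drop the aside ``or adapted to the history''; and the induced linear adversary depends on $\bx_t$ and on the query randomness, which is why the right-hand side is taken over the adaptive class $\text{Adv}_1^f$ rather than over fixed loss sequences.
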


\subsection{First Order To Zeroth Order (FOTZO)}

\begin{algorithm}[h]
\caption{First order to zeroth order - FOTZO($\mathcal{A}$)}
\label{alg:fotzo_meta}
\begin{algorithmic}[1]
\STATE \textbf{Input:} Shrunk domain $\hat{\mathcal{K}}_\delta$, Linear space $\mathcal{L}_0$, smoothing parameter $\delta \le r$, horizon $T$, algorithm $\mathcal{A}$
\STATE Pass $\hat{\mathcal{K}}_\delta$ as the domain to $\mathcal{A}$
\STATE $k \leftarrow \dim(\mathcal{L}_0)$
\FOR{$t = 1, 2, \dots, T$}
    \STATE $\mathbf{x}_t \leftarrow$ the action chosen by $\mathcal{A}$
    \STATE Play $\mathbf{x}_t$
    \STATE Let $f_t$ be the function chosen by the adversary
    \FOR{$i$ starting from 1, while $\mathcal{A}^{\text{query}}$ is not terminated for this time-step}
        \STATE Sample $\mathbf{v}_{t,i} \in \mathbb{S}^1 \cap \mathcal{L}_0$ uniformly
        \STATE Let $\mathbf{y}_{t,i}$ be the query chosen by $\mathcal{A}^{\text{query}}$
        \STATE Query the oracle at the point $\mathbf{y}_{t,i} + \delta \mathbf{v}_{t,i}$ to get $o_{t,i}$
        \STATE Pass $\frac{k}{\delta} o_{t,i} \mathbf{v}_{t}$ as the oracle output to $\mathcal{A}$
    \ENDFOR
\ENDFOR
\end{algorithmic}
\end{algorithm}

\begin{lemma}[Corollary 4 + Theorem 5 in \citet{pedramfar2024linear}]\label{lem:fotzo_reduction}
Let $\mathbf{F}$ be an $M_1$-Lipschitz function class over a convex set $\mathcal{K}$ and choose $\mathbf{c}$ and $r$ as described above and let $\delta < r$. Let $\mathcal{U} \subseteq \mathcal{K}^T$ be a compact set and let $\hat{\mathcal{U}} = (1 - \frac{\delta}{r})\mathcal{U} + \frac{\delta}{r}\mathbf{c}$. Assume $\mathcal{A}$ is an algorithm for online optimization with first order feedback. Then, if $\mathcal{A}' = \text{FOTZO}(\mathcal{A})$ where FOTZO is described by Algorithm~\ref{alg:fotzo_meta} and $0 < \alpha \le 1$,  we have
\begin{equation*}
    \mathcal{R}^{\mathcal{A}'}_{\alpha, \text{Adv}_0^o(\mathbf{F}, B_0)}(\mathcal{U}) \le \mathcal{R}^{\mathcal{A}}_{\alpha, \text{Adv}_1^o(\hat{\mathbf{F}}, \frac{k}{\delta}B_0)}(\hat{\mathcal{U}}) + \left(3 + \frac{2D}{r}\right)\delta M_1 T.
\end{equation*}
Further, if we have $\mathcal{R}^{\mathcal{A}}_{\alpha, \mathrm{Adv}_1^o(\mathbf{F}, B_1)} = O(B_1 T^\eta)$ and $\delta = T^{(\eta-1)/2}$, then we have
\[
    \mathcal{R}^{\mathcal{A}'}_{\alpha, \mathrm{Adv}_0^o(\mathbf{F}, B_0)} = O(B_0 T^{(1+\eta)/2}).
\]
\end{lemma}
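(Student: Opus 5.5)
The plan is the standard smoothing argument. I treat $\mathcal{A}'=\textsc{FOTZO}(\mathcal{A})$ as running $\mathcal{A}$ against a \emph{smoothed} first-order adversary, and then pay for three approximation errors: smoothing, shrinking the domain, and moving the comparator.

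\textbf{Step 1 (smoothed functions and the oracle).} For each $f_t\in\mathbf{F}$ I define the smoothed function
$$\hat f_t(\bx)\triangleq\E_{\bv\sim\mathbb{B}^1\cap\mathcal{L}_0}[f_t(\bx+\delta\bv)]$$
on $\hat{\mathcal{K}}_\delta$, where $\bv$ is uniform on the unit ball of $\mathcal{L}_0$, and I let $\hat{\mathbf{F}}$ denote the class of such functions.

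First I check that every query is feasible. If $\by\in\hat{\mathcal{K}}_\delta$, then $\by=(1-\delta/r)(\bx-\mathbf{c})+\mathbf{c}$ for some $\bx\in\K$, which is a convex combination of $\bx$ and $\mathbf{c}$ with weight $\delta/r$ on $\mathbf{c}$. Since the ball of radius $r$ around $\mathbf{c}$ (within $\mathrm{aff}(\K)$) lies in $\K$, the ball of radius $\delta$ around $\by$ in $\mathrm{aff}(\K)$ also lies in $\K$. Hence every query $\by_{t,i}+\delta\bv_{t,i}$ lies in $\K$.

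Next I invoke the classical sphere-sampling identity (Stokes/divergence theorem on $\mathcal{L}_0$). Conditional on the history and on $\by_{t,i}$, with $\bv_{t,i}$ drawn independently,
$$\E\!\left[\tfrac{k}{\delta}\,o_{t,i}\,\bv_{t,i}\right]=\nabla\hat f_t(\by_{t,i}),$$
using the unbiasedness of the zeroth-order oracle. This gives an unbiased estimate whose norm is at most $\tfrac{k}{\delta}B_0$. Therefore, from $\mathcal{A}$'s perspective, the interaction is exactly a first-order stochastic adversary in $\mathrm{Adv}_1^o(\hat{\mathbf{F}},\tfrac{k}{\delta}B_0)$ over the domain $\hat{\mathcal{K}}_\delta$. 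Its regret against any comparator sequence in $\hat{\mathcal{U}}$ is thus bounded by $\mathcal{R}^{\mathcal{A}}_{\alpha,\mathrm{Adv}_1^o(\hat{\mathbf{F}},\frac{k}{\delta}B_0)}(\hat{\mathcal{U}})$.

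\textbf{Step 2 (error decomposition).} Fix $(\bu_t)\in\mathcal{U}$ and set $\hat\bu_t=(1-\delta/r)\bu_t+(\delta/r)\mathbf{c}\in\hat{\mathcal{U}}$. I write
$$\alpha f_t(\bu_t)-f_t(\bx_t)=\alpha[f_t(\bu_t)-f_t(\hat\bu_t)]+\alpha[f_t(\hat\bu_t)-\hat f_t(\hat\bu_t)]+[\alpha\hat f_t(\hat\bu_t)-\hat f_t(\bx_t)]+[\hat f_t(\bx_t)-f_t(\bx_t)].$$
I bound the three error terms separately:
\begin{itemize}
\item By Lipschitzness, $|f_t(\bu_t)-f_t(\hat\bu_t)|\le M_1\tfrac{\delta}{r}\|\bu_t-\mathbf{c}\|\le \tfrac{D}{r}\delta M_1$.
\item Since $|\hat f_t-f_t|\le\delta M_1$ pointwise, each of the two smoothing terms is at most $\delta M_1$.
\item Using $\alpha\le1$, the total error per round is at most $(2+D/r)\delta M_1$, which is within the stated $(3+2D/r)\delta M_1$.
\end{itemize}
The slack in the stated constant also absorbs any smoothing of the comparator if one prefers to compare against $\hat f_t$ on $\mathcal{U}$ directly. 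Summing over $t$, taking expectations, and taking the supremum over $\mathcal{U}$ gives the first inequality.

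\textbf{Step 3 (rate).} If $\mathcal{R}^{\mathcal{A}}=O(B_1T^\eta)$ with $B_1=\tfrac{k}{\delta}B_0$, the bound becomes $O(B_0T^\eta/\delta)+O(\delta T)$. Choosing $\delta=T^{(\eta-1)/2}$ balances the two terms at $T^{(1+\eta)/2}$. For small $T$ we need $\delta<r$, which holds for $T$ large enough, and the constants are absorbed into the $O$.

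The main obstacle is Step 1's bookkeeping rather than the inequalities. One must check three things: that $\hat f_t$ (and hence $\hat{\mathbf{F}}$) is a legitimate member of the adversary class $\mathcal{A}$ is assumed to handle; that the estimator is unbiased \emph{conditionally} on the filtration, since $\by_{t,i}$ may depend on the past but $\bv_{t,i}$ is fresh; and that the affine-hull projection makes $k=\dim\mathcal{L}_0$ the correct normalizing constant. I would handle these first by citing the spherical-smoothing lemma in the $\mathcal{L}_0$-restricted form.
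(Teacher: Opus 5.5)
Your Steps~1--2 are the standard smoothing-plus-shrinking argument from the cited source. This paper gives no proof of its own, only the citation. Your argument establishes the first inequality correctly: the feasibility check for $\hbx_{t,i}$-type queries $\by_{t,i}+\delta\bv_{t,i}$, the conditional unbiasedness of $\tfrac{k}{\delta}o_{t,i}\bv_{t,i}$, and the four-term decomposition are all right. Your per-round error $(2+D/r)\delta M_1$ is sharper than the stated $(3+2D/r)\delta M_1$. That is legitimate, because in FOTZO the played point $\bx_t\in\hat{\mathcal{K}}_\delta$ is not perturbed, so only one smoothing error is paid on the learner's side.

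The step that is not closed is the passage to the rate in Step~3. The hypothesis controls $\mathcal{R}^{\mathcal{A}}_{\alpha,\mathrm{Adv}_1^o(\mathbf{F},B_1)}$, i.e.\ $\mathcal{A}$ run on $\mathcal{K}$ against functions from $\mathbf{F}$. Your first inequality instead leaves you with $\mathcal{A}$ run on $\hat{\mathcal{K}}_\delta$ against $\hat{\mathbf{F}}$, on a domain that moves with $\delta=T^{(\eta-1)/2}$. You list this as something to check, but the tool you propose, the spherical-smoothing identity, only gives unbiasedness and the factor $k$. It says nothing about class membership. Two facts are needed:
\begin{enumerate}
\item[(i)] each $\hat f_t$, as a function on $\hat{\mathcal{K}}_\delta$, belongs to a class and domain for which the $O(B_1T^\eta)$ guarantee is assumed;
\item[(ii)] the constant hidden in that guarantee is uniform in $\delta$.
\end{enumerate}

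Point (ii) is routine. $\hat{\mathcal{K}}_\delta$ is a homothetic copy of $\mathcal{K}$ with diameter at most $D$, and it contains a ball of radius $r-\delta\ge r/2$ once $\delta\le r/2$.

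Point (i) uses the structure of $\mathbf{F}$. Since $\hat f_t$ is an average of translates of $f_t$, it inherits $M_1$-Lipschitzness, $L$-smoothness, non-negativity and DR-submodularity. For a class closed under this averaging you therefore get $\hat{\mathbf{F}}\subseteq\mathbf{F}$ (restricted to $\hat{\mathcal{K}}_\delta$). For an arbitrary $\mathbf{F}$, the rate claim has to be read with $\hat{\mathbf{F}}$ and $\hat{\mathcal{K}}_\delta$ in the hypothesis.

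Point (i) depends on the domain as well as the class. A guarantee that relies on the domain being down-closed and containing $\bzero$, as Theorem~\ref{thm:main} does, does not transfer verbatim to $\hat{\mathcal{K}}_\delta$, which is neither. With (i) and (ii) supplied, your proof is complete.
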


\subsection{Semi-bandit To Bandit (STB)}

\begin{algorithm}[h]
\caption{Semi-bandit to bandit - STB($\mathcal{A}$)}
\label{alg:stb_meta}
\begin{algorithmic}[1]
\STATE \textbf{Input:} Shrunk domain $\hat{\mathcal{K}}_\delta$, Linear space $\mathcal{L}_0$, smoothing parameter $\delta \le r$, horizon $T$, algorithm $\mathcal{A}$
\STATE Pass $\hat{\mathcal{K}}_\delta$ as the domain to $\mathcal{A}$
\STATE $k \leftarrow \dim(\mathcal{L}_0)$
\FOR{$t = 1, 2, \dots, T$}
    \STATE Sample $\mathbf{v}_{t} \in \mathbb{S}^1 \cap \mathcal{L}_0$ uniformly
    \STATE $\mathbf{x}_t \leftarrow$ the action chosen by $\mathcal{A}$
    \STATE Play $\mathbf{x}_t + \delta \mathbf{v}_{t}$
    \STATE Let $f_t$ be the function chosen by the adversary
    \STATE Let $o_t$ be the output of the value oracle
    \STATE Pass $\frac{k}{\delta} o_{t} \mathbf{v}_{t}$ as the oracle output to $\mathcal{A}$
\ENDFOR
\end{algorithmic}
\end{algorithm}

\begin{lemma}[Corollary 5 + Theorem 6 in \citet{pedramfar2024linear}]\label{lem:stb_reduction}
Under the assumptions of Lemma~\ref{lem:fotzo_reduction}, if we assume that $\mathcal{A}$ is semi-bandit, then the same regret bounds hold with $\mathcal{A}' = \text{STB}(\mathcal{A})$, where STB is described by Algorithm 6.
Further, if we have $\delta = T^{-1}$, then $\mathcal{R}^{\mathcal{A}'}_{\alpha, \mathrm{Adv}_0^o(\mathbf{F})}$ has the same order of regret as that of $\mathcal{R}^{\mathcal{A}}_{\alpha, \mathrm{Adv}_1^o(\mathbf{F}, B_1)}$ with $B_1$ replaced with $k M_1$.
\end{lemma}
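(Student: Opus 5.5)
The plan is to follow the smoothing argument behind FOTZO (Lemma~\ref{lem:fotzo_reduction}). The only new ingredient is that in STB the perturbed point is \emph{played}, not merely queried. For each $f_t$ define the smoothed function
\[
\hat f_t(\bx) \triangleq \E_{\bu \sim \mathrm{Unif}(\mathbb{B}^1\cap\mathcal{L}_0)}\big[f_t(\bx+\delta\bu)\big],
\]
which is well defined on $\hat{\mathcal{K}}_\delta$ because $\delta\le r$. The standard Stokes/divergence identity gives $\nabla \hat f_t(\bx)=\frac{k}{\delta}\E_{\bv\sim \mathrm{Unif}(\mathbb{S}^1\cap\mathcal{L}_0)}[f_t(\bx+\delta\bv)\bv]$. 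Conditioning on the past, $\frac{k}{\delta}o_t\bv_t$ is then an unbiased estimate of $\nabla\hat f_t(\bx_t)$, bounded by $\frac{k}{\delta}B_0$. This estimate is evaluated exactly at the action $\bx_t$ chosen by $\mathcal{A}$, so from the point of view of $\mathcal{A}$ the interaction is a legitimate semi-bandit first-order game against $\hat{\mathbf{F}}=\{\hat f_t\}$ with oracle bound $\frac{k}{\delta}B_0$. I will also check that $\hat{\mathbf{F}}$ stays inside the class the guarantee for $\mathcal{A}$ assumes. Averaging of translates preserves $M_1$-Lipschitzness, smoothness, non-negativity and DR-submodularity (the Hessian stays entrywise non-positive), so $\hat f_t$ belongs to the same class, and linearizability via Theorem~\ref{thm:main} applies to it.

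Next I decompose the regret of $\mathcal{A}'$ against a comparator sequence $\by\in\mathcal{U}$. Let $\hat\by=(1-\frac{\delta}{r})\by+\frac{\delta}{r}\mathbf{c}\in\hat{\mathcal{U}}$. The decomposition is
\[
\alpha\textstyle\sum_t f_t(\by_t)-\sum_t f_t(\bx_t+\delta\bv_t)
= \Big[\alpha\sum_t \hat f_t(\hat\by_t)-\sum_t \hat f_t(\bx_t)\Big] + E_1 + E_2 + E_3 ,
\]
where:
\begin{itemize}
\item $E_1=\alpha\sum_t(f_t(\by_t)-f_t(\hat\by_t))\le \frac{\delta}{r}DM_1T$, since $\|\by_t-\hat\by_t\|\le\frac{\delta}{r}D$ and $\alpha\le1$;
\item $E_2=\alpha\sum_t(f_t(\hat\by_t)-\hat f_t(\hat\by_t))\le\delta M_1T$;
\item $E_3=\sum_t(\hat f_t(\bx_t)-f_t(\bx_t+\delta\bv_t))$ has expectation at most $2\delta M_1T$, comparing each term through $f_t(\bx_t)$.
\end{itemize}
The bracketed term is, in expectation, the regret of $\mathcal{A}$ against $\hat{\mathbf{F}}$ on $\hat{\mathcal{U}}$. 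This gives the same inequality as Lemma~\ref{lem:fotzo_reduction}, with the additive term $(3+\frac{2D}{r})\delta M_1T$ (I will be slightly generous with constants).

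For the ``further'' clause I would not simply substitute $B_1=\frac{k}{\delta}B_0$ into a bound of order $O(B_1T^\eta)$, since that would blow up at $\delta=T^{-1}$. Instead I will trace the dependence inside the semi-bandit guarantee, as in Theorem~6 of \citet{pedramfar2024linear}. The oracle output is used only through its conditional mean $\nabla\hat f_t$, whose norm is at most $M_1$ (and at most $kM_1$ after the projection onto $\mathcal{L}_0$). The large magnitude $\frac{k}{\delta}B_0$ enters only through the variance and boundedness terms of the base learner, and I must verify that in the semi-bandit/blocking analysis these are controlled so that $B_1$ is effectively replaced by $kM_1$. With $\delta=T^{-1}$ the additive smoothing error becomes $O(1)$, which leaves the order unchanged. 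This bookkeeping is the main obstacle: I expect to have to reopen the variance-dependent analysis of the specific base learner rather than treat $\mathcal{A}$ as a black box. Adaptive and dynamic versions follow verbatim, because every error term above is pointwise in $t$ and so restricts to any interval or comparator sequence.
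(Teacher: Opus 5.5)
Your argument for the first part is sound. It is the standard smoothing argument behind Theorem~6 of \citet{pedramfar2024linear}; the paper itself offers no proof beyond that citation. The vector $\frac{k}{\delta}o_t\bv_t$ is a conditionally unbiased estimate of $\nabla\hat f_t(\bx_t)$ at $\mathcal{A}$'s own action. Your terms $E_1,E_2,E_3$ correctly pay for shrinking the comparator, for smoothing, and for playing the perturbed point.

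The gap is the ``further'' clause. You leave it as something you ``must verify'', but the mechanism you hope for does not exist. The base learner does not use its feedback only through the conditional mean. SO-OGA, and the SFTT/OMBQ wrapper that forwards one rescaled BQND vector per block, moves by its step size times the \emph{realized} vector. Its guarantee (Theorem~\ref{thm:so_oga_regret}, with the step size tuned to the norm bound) therefore scales with a bound on, or the second moment of, the vectors it receives.

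For a one-point estimate, $\|\frac{k}{\delta}o_t\bv_t\|=\frac{k}{\delta}|o_t|$. This is of order $kB_0/\delta$ whenever the observed values are bounded away from zero, even with an exact oracle and even when $\nabla\hat f_t=\bzero$ (take $f_t\equiv c>0$). Bringing this down to $kM_1$ needs a second evaluation in the same round, as in the two-point estimator $\frac{k}{2\delta}\bigl(f(\by+\delta\bv)-f(\by-\delta\bv)\bigr)\bv$, whose norm is at most $kM_1$. That is the setting (deterministic zeroth-order full information) where ``$\delta=T^{-1}$, $B_1\to kM_1$'' is legitimate. Bandit feedback, however, returns a single value at the played point.

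Consequently, with $\delta=T^{-1}$ your reduction only yields $O(kB_0T^{1+\eta})$, which is vacuous. What it does yield is the trade-off of Lemma~\ref{lem:fotzo_reduction}: $\delta=T^{(\eta-1)/2}$ and $O(B_0T^{(1+\eta)/2})$, i.e.\ $T^{5/6}$ from the $\eta=2/3$ semi-bandit algorithm. That rate is consistent with Proposition~\ref{prop:dynamic_limited}, not with the $O(T^{2/3})$ of Proposition~\ref{prop:bandit}.

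Separately, your claim that $\hat f_t$ stays in the class, so that Theorem~\ref{thm:main} applies to it, does not hold here. $\hat{\mathcal{K}}_\delta$ is a contraction of $\K$ toward a relative-interior point $\mathbf{c}\neq\bzero$, so it is not down-closed and does not contain $\bzero$. Moreover, $\hat f_t$ is only defined at points whose $\delta$-neighbourhood in $\mathrm{aff}(\K)$ lies in the domain of $f_t$. The surrogate $F$ of Theorem~\ref{thm:main}, by contrast, evaluates the function along $\bone-e^{-z\bx}$ for all $z\in[0,1]$, i.e.\ arbitrarily close to the origin. This point is not needed for the generic inequality, which is stated against $\hat{\mathbf{F}}$ on $\hat{\mathcal{U}}$. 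It is, however, exactly what you would need in order to feed a bound proved for $\mathbf{F}$ into the ``further'' clause.
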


\subsection{Stochastic Full-information To Trivial query (SFTT)}

\begin{algorithm}[h]
\caption{Stochastic Full-information To Trivial query - SFTT($\mathcal{A}$)}
\label{alg:sftt_meta}
\begin{algorithmic}[1]
\STATE \textbf{Input:} base algorithm $\mathcal{A}$, horizon $T$, block size $L > K$.
\FOR{$q = 1, 2, \dots, T/L$}
    \STATE Let $\hat{\mathbf{x}}_q$ be the action chosen by $\mathcal{A}^{\text{action}}$
    \STATE Let $(\hat{\mathbf{y}}_q^i)_{i=1}^K$ be the queries selected by $\mathcal{A}^{\text{query}}$
    \STATE Let $(t_{q,1}, \dots, t_{q,L})$ be a random permutation of $\{(q-1)L+1, \dots, qL\}$
    \FOR{$t = (q-1)L+1, \dots, qL$}
        \IF{$t = t_{q,i}$ for some $1 \le i \le K$}
            \STATE Play the action $\mathbf{x}_t = \hat{\mathbf{y}}_q^i$
            \STATE Return the observation to the query oracle as the response to the $i$-th query
        \ELSE
            \STATE Play the action $\mathbf{x}_t = \hat{\mathbf{x}}_q$
        \ENDIF
    \ENDFOR
\ENDFOR
\end{algorithmic}
\end{algorithm}

\begin{lemma}[Corollary 6 + Theorem 7 in \citet{pedramfar2024linear}]
\label{lem:sftt_reduction}
Let $\mathcal{A}$ be an online optimization algorithm with full-information feedback and with $K$ queries at each time-step where $\mathcal{A}^{\text{query}}$ does not depend on the observations in the current round and $\mathcal{A}' = \mathrm{SFTT}(\mathcal{A})$. Then, for any $M_1$-Lipschitz function class $\mathbf{F}$ that is closed under convex combination and any $B_1 \ge M_1$, $0 < \alpha \le 1$ and $1 \le a \le b \le T$, let $a' = \lfloor(a-1)/L\rfloor + 1$, $b' = \lceil b/L \rceil$, $D = \mathrm{diam}(\mathcal{K})$ and let $\{T\}$ and $\{T/L\}$ denote the horizon of the adversary. Then, we have
\[
    \mathcal{R}^{\mathcal{A}'}_{\alpha, \mathrm{Adv}_1^o(\mathbf{F}, B_1)\{T\}}(\mathcal{K}_{\star}^T)[a, b] \le M_1 D K (b' - a' + 1) + L \mathcal{R}^{\mathcal{A}}_{\alpha, \mathrm{Adv}_1^o(\mathbf{F}, B_1)\{T/L\}}(\mathcal{K}_{\star}^{T/L})[a', b'],
\]
Further, if we have $\mathcal{R}^{\mathcal{A}}_{\alpha, \mathrm{Adv}_i^o(\mathbf{F}, B)}(\mathcal{K}_{\star}^T)[a, b] = O(B T^\eta)$, $K = O(T^\theta)$ and $L = T^{\frac{1+\theta-\eta}{2-\eta}}$, then we have
\[
    \mathcal{R}^{\mathcal{A}'}_{\alpha, \mathrm{Adv}_i^o(\mathbf{F}, B)}(\mathcal{K}_{\star}^T)[a, b] = O \left( B T^{\frac{(1+\theta)(1-\eta)+\eta}{2-\eta}} \right).
\]
As a special case, when $K = O(1)$, then we have
\[
    \mathcal{R}^{\mathcal{A}'}_{\alpha, \mathrm{Adv}_i^o(\mathbf{F}, B)}(\mathcal{K}_{\star}^T)[a, b] = O \left( B T^{\frac{1}{2-\eta}} \right).
\]
\end{lemma}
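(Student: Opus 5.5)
The plan is a block decomposition that turns each length-$L$ block of the $T$-round game into a single round of a $T/L$-round game played by $\mathcal{A}$. Two things need checking: that $\mathcal{A}$ still faces a legitimate adversary from the class $\mathrm{Adv}_1^o(\mathbf{F},B_1)$ on the averaged functions, and that the regret of $\mathcal{A}'$ splits into a query-cost term plus $L$ times the block-level regret of $\mathcal{A}$. The asymptotic forms then follow by choosing $L$.

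\textbf{Step 1: averaged block functions.} For block $q$, define $\bar f_q \triangleq \frac{1}{L}\sum_{t=(q-1)L+1}^{qL} f_t$. This lies in $\mathbf{F}$ because $\mathbf{F}$ is closed under convex combinations, and it is $M_1$-Lipschitz.

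\textbf{Step 2: the query responses are valid stochastic feedback for $\bar f_q$.} The query points $(\hat{\by}_q^i)_{i\le K}$ are fixed before the block starts, since $\mathcal{A}^{\text{query}}$ does not depend on current-round observations. The permutation is uniform, so each query time $t_{q,i}$ is marginally uniform over the block. Hence the first-order observation returned at $t_{q,i}$ is an unbiased estimate of $\nabla \bar f_q(\hat{\by}_q^i)$. Its norm is still bounded by $B_1$, since each individual oracle satisfies that bound. Conditioning on the history up to block $q$, this is exactly what a full-information, $K$-query adversary in $\mathrm{Adv}_1^o(\mathbf{F},B_1)\{T/L\}$ would supply.

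\textbf{Step 3: decompose the regret of $\mathcal{A}'$.} Fix a comparator $\bu$. Within block $q$, the $L-K$ non-query rounds play $\hat{\bx}_q$. For each of the $K$ query rounds,
\begin{equation*}
f_t(\hat{\bx}_q) - f_t(\hat{\by}_q^i) \le M_1 \|\hat{\bx}_q - \hat{\by}_q^i\| \le M_1 D,
\end{equation*}
by Lipschitzness and the diameter bound. Therefore
\begin{equation*}
\sum_{t\in\text{block } q}\big(\alpha f_t(\bu)-f_t(\bx_t)\big) \le M_1 D K + L\big(\alpha \bar f_q(\bu) - \bar f_q(\hat{\bx}_q)\big).
\end{equation*}
Summing over $q=a',\dots,b'$ and taking expectations, the second term becomes $L\,\mathcal{R}^{\mathcal{A}}[a',b']$ and the first becomes $M_1DK(b'-a'+1)$.

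\textbf{Main obstacle: the boundary blocks.} An interval $[a,b]$ generally does not align with block boundaries, and Step 3 sums over whole blocks. Extending $[a,b]$ to the covering blocks adds terms $\alpha f_t(\bu)-f_t(\bx_t)$ that need not be nonnegative, so the inequality could go the wrong way. My first attempt would be to argue per block relative to the fixed block action $\hat{\bx}_q$. On a partial end block, the played points differ from $\hat{\bx}_q$ only on query rounds, and the averaged-function identity can be applied in expectation to the full block. If that fails, I would use the flexibility of the comparator class $\mathcal{K}_\star$ in \citet{pedramfar2024linear} to absorb the partial-block mismatch. I expect this to be the only delicate point.

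\textbf{Step 4: rates.} Take $K=O(T^\theta)$ and block-level regret $O(B(T/L)^\eta)$. The bound becomes
\begin{equation*}
O\big(T^{1+\theta}/L + L^{1-\eta}T^{\eta}\big).
\end{equation*}
Balancing the two terms gives $L^{2-\eta}=T^{1+\theta-\eta}$, i.e. $L=T^{\frac{1+\theta-\eta}{2-\eta}}$. Substituting back yields the exponent $\frac{(1+\theta)(1-\eta)+\eta}{2-\eta}$. Setting $\theta=0$ gives $\frac{1}{2-\eta}$.
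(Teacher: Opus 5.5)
Your Steps 1, 2 and 4, and Step 3 for block-aligned intervals ($a=(a'-1)L+1$, $b=b'L$), are the standard block-averaging argument behind the cited result, and they are correct. The gap is the boundary issue you flag, and neither proposed patch closes it.

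Extending $[a,b]$ to whole blocks needs $\sum_{t=(a'-1)L+1}^{a-1}\bigl(\alpha f_t(\bu)-f_t(\hat{\bx}_{a'})\bigr)\ge 0$, and nothing guarantees this. You also cannot replace $\bar f_{a'}$ by the average over $\{a,\dots,a'L\}$ in the block game. The block-$a'$ feedback that $\mathcal{A}$ receives is unbiased only for $\nabla\bar f_{a'}$, and that feedback determines $\hat{\bx}_{a'+1},\dots,\hat{\bx}_{b'}$. Finally, the comparator is a single point for the whole interval, so $\mathcal{K}_\star$ offers no slack. The right end is less rigid: the block-$b'$ function affects none of $\hat{\bx}_{a'},\dots,\hat{\bx}_{b'}$, and the right-hand side is a supremum over block adversaries, which your per-adversary Step 3 never uses. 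At the left end there is no such freedom.

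In fact, with a misaligned left endpoint the first display is false as stated.
\begin{itemize}
\item \textbf{Setting.} Take $\K=[0,1]$, $\mathbf{F}=\{x\mapsto cx:\ |c|\le 1\}$, $\alpha=1$, $M_1=B_1=D=1$, $K=1$ and $T=2L$.
\item \textbf{Block algorithm.} Let $\mathcal{A}$ play and query $\hat{x}_1=\tfrac12$, then $\hat{x}_2=\tfrac{1+g_1}{2}$, where $g_1$ is the gradient it observed.
\item \textbf{Its regret.} Against $c_1x,c_2x$, its regret on $[1,2]$ equals $\tfrac12\bigl(1-(1-c_1)(1-c_2)\bigr)$ if $c_1+c_2\ge 0$ and $\tfrac12\bigl(1-(1+c_1)(1+c_2)\bigr)$ otherwise. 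This is linear in $\hat{x}_2$, so it also holds for noisy unbiased feedback bounded by $1$. Hence $\mathcal{R}^{\mathcal{A}}[1,2]\le\tfrac12$.
\item \textbf{The adversary.} Let $f_t(x)=-x$ for $t\le L$ and $f_t(x)=x$ for $t>L$. Then $\mathrm{SFTT}(\mathcal{A})$ plays $\tfrac12$ throughout block $1$ and $0$ throughout block $2$.
\item \textbf{Comparison.} On $[a,b]=[L,2L]$ (so $a'=1$, $b'=2$) its regret is $(L-1)+\tfrac12$, while the claimed bound is $2+\tfrac{L}{2}$. The inequality fails for $L\ge 6$.
\end{itemize}

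What your argument does prove is the display with an extra additive term $2(L-1)\sup_{f\in\mathbf{F},\,\bx,\by\in\K}\bigl(f(\bx)-\alpha f(\by)\bigr)$. When $\alpha=1$ this is at most $2(L-1)M_1D$. To get it, run Step 3 over the full covering blocks and bound crudely each of the at most $2(L-1)$ out-of-interval rounds. This term is $O(L)\le L\cdot O\bigl(B(T/L)^{\eta}\bigr)$, so your Step 4 rates survive unchanged, and those rates are all the paper actually uses.
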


\section{Algorithms for Dynamic Regret}
\label{app:dynamic_algos}

To support the dynamic regret guarantees presented in Proposition~\ref{prop:dynamic-regret}, we restate the \textsc{Improved Ader} meta-algorithm and its corresponding expert algorithm from \citet{pedramfar2024linear}. We also include the main theorem governing its performance.

\begin{algorithm}[H]
\caption{Improved Ader - IA (Restated Algorithm 10 from \citet{pedramfar2024linear})}
\label{alg:improved_ader}
\begin{algorithmic}[1]
\STATE \textbf{Input:} horizon $T$, constraint set $\mathcal{K}$, step size $\lambda$, a set $\mathcal{H}$ containing step sizes for experts.
\STATE Activate a set of experts $\{E^\eta \mid \eta \in \mathcal{H}\}$ by invoking Algorithm \ref{alg:expert_instance} for each step size $\eta \in \mathcal{H}$.
\STATE Sort step sizes in ascending order $\eta_1 \le \dots \le \eta_N$, and set $w_{1}^{\eta_i} = \frac{C}{i(i+1)}$ where $C = 1 + \frac{1}{|\mathcal{H}|}$.
\FOR{$t = 1, 2, \dots, T$}
    \STATE Receive $\mathbf{x}_t^\eta$ from each expert $E^\eta$.
    \STATE Play the action $\mathbf{x}_t = \sum_{\eta \in \mathcal{H}} w_t^\eta \mathbf{x}_t^\eta$ and observe $\mathbf{o}_t = \nabla f_t(\mathbf{x}_t)$.
    \STATE Define $\ell_t(\mathbf{y}) := \langle \mathbf{o}_t, \mathbf{y} - \mathbf{x}_t \rangle$.
    \STATE Update the weight of each expert by $w_{t+1}^\eta = \frac{w_t^\eta e^{-\lambda \ell_t(\mathbf{x}_t^\eta)}}{\sum_{\mu \in \mathcal{H}} w_t^\mu e^{-\lambda \ell_t(\mathbf{x}_t^\mu)}}$.
    \STATE Send the gradient $\mathbf{o}_t$ to each expert $E^\eta$.
\ENDFOR
\end{algorithmic}
\end{algorithm}

\begin{algorithm}[H]
\caption{Improved Ader : Expert algorithm (Restated Algorithm 11 from \cite{pedramfar2024linear})}
\label{alg:expert_instance}
\begin{algorithmic}[1]
\STATE \textbf{Input:} horizon $T$, constraint set $\mathcal{K}$, step size $\eta$.
\STATE Let $\mathbf{x}_1^\eta$ be any point in $\mathcal{K}$.
\FOR{$t = 1, 2, \dots, T$}
    \STATE Send $\mathbf{x}_t^\eta$ to the main algorithm.
    \STATE Receive $\mathbf{o}_t$ from the main algorithm.
    \STATE \textbf{Update:} $\mathbf{x}_{t+1}^\eta = \mathbf{P}_{\mathcal{K}}(\mathbf{x}_t^\eta + \eta \mathbf{o}_t)$
    \STATE \COMMENT{\textit{Note: To maintain the projection-free property of our framework, we implement this update using the Frank-Wolfe step or the Infeasible Projection subroutine from Algorithm \ref{alg:so_ip}.}}
\ENDFOR
\end{algorithmic}
\end{algorithm}

\begin{theorem}[Dynamic Regret of Improved Ader, Theorem 10 in \citet{pedramfar2024linear}]
\label{thm:dynamic_regret_bound}
Let $\mathbf{L}$ be a class of linear functions over $\mathcal{K}$ such that $\|\ell\| \le M_1$ for all $\ell \in \mathbf{L}$ and let $D = \text{diam}(\mathcal{K})$. Set $\mathcal{H} := \{\eta_i = \frac{2^{i-1}D}{M_1}\sqrt{\frac{7}{2T}} \mid 1 \le i \le N\}$ where $N = \lceil \frac{1}{2} \log_2(1 + 4T/7) \rceil + 1$ and $\lambda = \sqrt{2/(TM_1^2 D^2)}$. Then for any comparator sequence $\mathbf{u} \in \mathcal{K}^T$, we have
\begin{equation}
    \mathcal{R}^{\text{IA}}_{1, \text{Adv}_1^f(\mathbf{L})}(\mathbf{u}) = O\left(M_1 \sqrt{T(1 + P_T(\mathbf{u}))}\right)
\end{equation}
where $P_T(\mathbf{u}) = \sum_{t=1}^T \|\mathbf{u}_t - \mathbf{u}_{t-1}\|_2$ is the path length of the comparator sequence.
\end{theorem}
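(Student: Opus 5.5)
The statement to prove is the last one shown, Theorem~\ref{thm:dynamic_regret_bound}: the dynamic regret of Improved Ader on linear losses over $\mathcal{K}$. I will follow the standard Ader analysis of \citet{zhang2018adaptive}. The dynamic regret against any comparator sequence $\mathbf{u}$ splits into two pieces: a meta-regret of the exponential-weights combiner against the best expert, plus the expert-regret of that expert $E^{\eta}$ against $\mathbf{u}$. The step-size grid $\mathcal{H}$ is chosen so that some $\eta\in\mathcal{H}$ is within a factor $2$ of the ideal (but unknown, since it depends on $P_T$) step size.

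\textbf{Step 1 (expert regret).} Fix one expert running projected OGA with step $\eta$ on the linear losses $\ell_t(\mathbf{y})=-\langle \mathbf{o}_t,\mathbf{y}\rangle$ (maximization). Expanding $\|\mathbf{x}^\eta_{t+1}-\mathbf{u}_t\|^2$ and using nonexpansiveness of $\mathbf{P}_{\mathcal{K}}$ gives
\[
\langle \mathbf{o}_t,\mathbf{u}_t-\mathbf{x}^\eta_t\rangle \le \frac{\|\mathbf{x}^\eta_t-\mathbf{u}_t\|^2-\|\mathbf{x}^\eta_{t+1}-\mathbf{u}_t\|^2}{2\eta}+\frac{\eta M_1^2}{2}.
\]
Summing over $t$, the telescoping breaks wherever $\mathbf{u}_t\neq\mathbf{u}_{t+1}$. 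I will handle those breaks with $\|\mathbf{x}-\mathbf{u}_t\|^2-\|\mathbf{x}-\mathbf{u}_{t+1}\|^2\le 2D\|\mathbf{u}_t-\mathbf{u}_{t+1}\|$, which gives an expert regret of at most
\[
\frac{D^2+2DP_T}{2\eta}+\frac{\eta M_1^2T}{2}.
\]
If the expert update is implemented by the infeasible projection of Algorithm~\ref{alg:so_ip}, the same inequality holds up to the shrinking error, as in Theorem~\ref{thm:so_oga_regret}. The optimal step size is $\eta^*=\sqrt{(D^2+2DP_T)/(M_1^2T)}$, and it lies in $[\eta_1,\eta_N]$ because $0\le P_T\le TD$. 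The grid is geometric with ratio $2$, so some $\eta_k\le\eta^*\le2\eta_k$, and that expert's regret is $O(M_1\sqrt{T(D^2+DP_T)})$.

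\textbf{Step 2 (meta-regret).} Apply exponential weights to the surrogate losses $\ell_t(\mathbf{x}^\eta_t)$. Each loss is bounded by $M_1D$ in absolute value, since it is an inner product with a difference of points of $\mathcal{K}$. Linearity gives $\langle\mathbf{o}_t,\mathbf{x}_t\rangle=\sum_\eta w^\eta_t\langle\mathbf{o}_t,\mathbf{x}^\eta_t\rangle$, so the standard Hedge bound applies with prior weight $w_1^{\eta_k}\ge 1/(k(k+1))$. It yields a meta-regret of at most
\[
\frac{\ln(k(k+1))}{\lambda}+\frac{\lambda TM_1^2D^2}{8}.
\]
With the given $\lambda$ and $k\le N=O(\log T)$, this is $O(M_1D\sqrt{T}\log\log T)$, which is absorbed into the $O(\cdot)$ (or into $\tilde O$).

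\textbf{Step 3 (combine).} Adding the two pieces gives the claimed $O(M_1\sqrt{T(1+P_T)})$, treating $D$ as a constant. The main obstacle is Step 1's telescoping with a moving comparator combined with projection-free updates. The clean route is to state the bound for exact projection and then invoke the infeasible-projection guarantee, which preserves the key inequality against every point of the shrunk set $\hat K_\delta$. Comparators outside $\hat K_\delta$ are shifted into it at an extra cost of $O(\delta M_1 T)=O(\sqrt{T})$ when $\delta\propto T^{-1/2}$.
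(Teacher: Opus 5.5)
Your decomposition (expert regret of projected OGA against a moving comparator, a Hedge meta-regret against the best expert, and a geometric step-size grid) is the standard Improved Ader argument of \citet{zhang2018adaptive}. That argument underlies Theorem 10 of \citet{pedramfar2024linear}; the paper itself gives no proof and only cites that result.

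There is, however, a genuine gap in how you finish. You bound the best expert's index by $k\le N=O(\log T)$, which gives a meta-regret of order $M_1D\sqrt{T}\log\log T$. You then claim this is absorbed into $O(M_1\sqrt{T(1+P_T)})$. It is not: when $P_T=O(1)$ the target is $O(M_1\sqrt{T})$, and $\sqrt{T}\log\log T$ is not $O(\sqrt{T})$. As written, you have only proved the $\tilde O$ version. That suffices for Proposition~\ref{prop:dynamic-regret}, but it is not the stated theorem.

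The missing observation is that the relevant index is controlled by $P_T$, not by $T$. The selected expert satisfies $2^{k-1}\eta_1\le\max\{\eta_1,\eta^\ast\}\le\eta_1\sqrt{1+2P_T/D}$, so $k\le 1+\frac12\log_2(1+2P_T/D)$. Hence $\ln\bigl(k(k+1)\bigr)\le 2\ln\bigl(2+\log_2(1+2P_T/D)\bigr)=O\bigl(\sqrt{1+P_T/D}\bigr)$. The meta-regret then becomes $O\bigl(M_1\sqrt{T(D^2+DP_T)}\bigr)$, matching the expert term.

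Two smaller slips do not affect the order but should be corrected.

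First, your $\eta^\ast=\sqrt{(D^2+2DP_T)/(M_1^2T)}$ falls below $\eta_1=\frac{D}{M_1}\sqrt{7/(2T)}$ whenever $P_T<5D/4$. So the claim $\eta^\ast\in[\eta_1,\eta_N]$ fails at the lower end. The grid is tuned to the looser bound $\frac{7D^2+4DP_T}{4\eta}+\frac{\eta M_1^2T}{2}$, which dominates yours. Its minimizer $\eta_1\sqrt{1+4P_T/(7D)}$ does lie in $[\eta_1,\eta_N]$. Alternatively, take $k=1$ in that regime, at an extra cost of $\eta_1M_1^2T/2=O(M_1D\sqrt{T})$.

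Second, the surrogate losses $\langle\mathbf{o}_t,\mathbf{x}^\eta_t-\mathbf{x}_t\rangle$ lie in $[-M_1D,M_1D]$, an interval of width $2M_1D$. Hoeffding's lemma therefore gives a second term $\lambda TM_1^2D^2/2$, not $\lambda TM_1^2D^2/8$. The prescribed $\lambda=\sqrt{2/(TM_1^2D^2)}$ is exactly the tuning for width $2M_1D$.
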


\end{document}